\documentclass[10pt]{article}
\usepackage[margin=1in]{geometry}

\pdfoutput=1

\usepackage[colorlinks,citecolor=red]{hyperref}

\usepackage{graphicx}
\graphicspath{{figures/}}
\DeclareGraphicsExtensions{.pdf}
\usepackage[tight,footnotesize]{subfigure}

\usepackage{natbib}
\bibpunct{(}{)}{;}{a}{}{,}

\usepackage[cmex10]{amsmath}
\usepackage{amsfonts}
\usepackage{bm}
\numberwithin{equation}{section}

\usepackage{amsmath}
\usepackage{amsfonts}
\usepackage{mathrsfs}
\usepackage{amssymb}
\usepackage{color}

\usepackage{graphicx}
\usepackage{url}
\usepackage{algorithm}
\usepackage{algorithmic}
\usepackage{array}

\usepackage{amsthm}

\newtheorem{thm}{Theorem}

\usepackage{multirow}
\usepackage{array}
\renewcommand{\arraystretch}{1.5}

\begin{document}

\title{\textbf{Tensor Canonical Correlation Analysis for Multi-view Dimension Reduction}}

\author{Yong~Luo\thanks{Key Laboratory of Machine Perception (Ministry of Education), School of Electronics Engineering and Computer Science, Peking University, Beijing, China.}\ \thanks{Division of Networks and Distributed Systems, School of Computer Engineering, Nanyang Technological University, Singapore.}\ \thanks{Centre for Quantum Computation \& Intelligent Systems and the Faculty of Engineering \& Information Technology, University of Technology, Sydney, Sydney, Australia.}
\hspace{1cm} Dacheng~Tao\footnotemark[3]
\hspace{1cm} Yonggang~Wen\footnotemark[2] \\
Kotagiri~Ramamohanarao\thanks{Department of Computer Science and Software Engineering, The University of Melbourne, Australia.}
\hspace{1cm} Chao~Xu\footnotemark[1] \\
yluo180@gmail.com, dacheng.tao@uts.edu.au, ygwen@ntu.edu.sg \\ rao@csse.unimelb.edu.au, xuchao@cis.pku.edu.cn}


\date{08 February 2015}

\maketitle

\begin{abstract}
Canonical correlation analysis (CCA) has proven an effective tool for two-view dimension reduction due to its profound theoretical foundation and success in practical applications. In respect of multi-view learning, however, it is limited by its capability of only handling data represented by two-view features, while in many real-world applications, the number of views is frequently many more. Although the ad hoc way of simultaneously exploring all possible pairs of features can numerically deal with multi-view data, it ignores the high order statistics (correlation information) which can only be discovered by simultaneously exploring all features.

Therefore, in this work, we develop tensor CCA (TCCA) which straightforwardly yet naturally generalizes CCA to handle the data of an arbitrary number of views by analyzing the covariance tensor of the different views. TCCA aims to directly maximize the canonical correlation of multiple (more than two) views. Crucially, we prove that the multi-view canonical correlation maximization problem is equivalent to finding the best rank-1 approximation of the data covariance tensor, which can be solved efficiently using the well-known alternating least squares (ALS) algorithm. As a consequence, the high order correlation information contained in the different views is explored and thus a more reliable common subspace shared by all features can be obtained. In addition, a non-linear extension of TCCA is presented. Experiments on various challenge tasks, including large scale biometric structure prediction, internet advertisement classification and web image annotation, demonstrate the effectiveness of the proposed method.
\end{abstract}


\section{Introduction}\label{sec:introduction}

The features utilized in many real-world data mining tasks are frequently of high dimension and extracted from multiple views (or sources). For example, both the page content and hyperlink represented by bag-of-words (BOW) are usually used in web page classification \cite{A-Blum-and-T-Mitchell-COLT-1998, DP-Foster-et-al-TR-TTI-2008}, and it is common to combine the global (such as GIST \cite{A-Oliva-and-A-Torralba-IJCV-2001}) and local (such as SIFT \cite{DG-Lowe-IJCV-2004}) descriptors in image annotation \cite{TS-Chua-et-al-CIVR-2009, M-Guillaumin-et-al-ICCV-2009}. In these applications, the features can have dimensions of up to several hundred or thousand.

Multi-view dimension reduction \cite{DP-Foster-et-al-TR-TTI-2008} seeks a low-dimensional common subspace to compactly represent the heterogeneous data, in which each of the data examples is associated with multiple high-dimensional features. It often benefits the subsequent learning process significantly in that the curse-of-dimensionality is alleviated and the computation-al efficiency is improved \cite{CP-Hou-et-al-PR-2010, YH-Han-et-al-TCSVT-2012}. Canonical correlation analysis (CCA), which is designed to inspect the linear relationship between two sets of variables \cite{DR-Hardoon-et-al-NCn-2004, FR-Bach-and-MI-Jordan-TR-Berkeley-2005}, was formally introduced as a multi-view dimension reduction method in \cite{DP-Foster-et-al-TR-TTI-2008}, where the authors prove that the labeled instance complexity can be effectively reduced under certain weak assumptions. In addition, CCA has been widely used for multi-view classification \cite{JDR-Farquhar-et-al-NIPS-2005}, regression \cite{SM-Kakade-and-DP-Foster-COLT-2007}, clustering \cite{MB-Blaschko-and-CH-Lampert-CVPR-2008, K-Chaudhuri-et-al-ICML-2009}, etc. Theoretically, Bach and Jordan \cite{FR-Bach-and-MI-Jordan-TR-Berkeley-2005} interpreted CCA probabilistically as a latent variable model, and thus it is able to be involved in a larger probabilistic model.

In spite of the profound theoretical foundation and practical success of CCA in multi-view learning, it can only handle data that is represented by two-view features. The features utilized in many real-world applications, however, are usually extracted from more than two views. For example, different kinds of color, texture and shape features are popular used in visual analysis-based tasks such as image annotation and video retrieval. A typical approach for generalizing CCA to several views is to maximize the sum of pairwise correlations between different views \cite{J-Via-et-al-NN-2007}. The main drawback of this strategy is that only the statistics (correlation information) between pairs of features is explored, while high-order statistics that can only be obtained by simultaneously examining all features is ignored.

\begin{figure}[!t]
\centering
\includegraphics[width=0.6\linewidth]{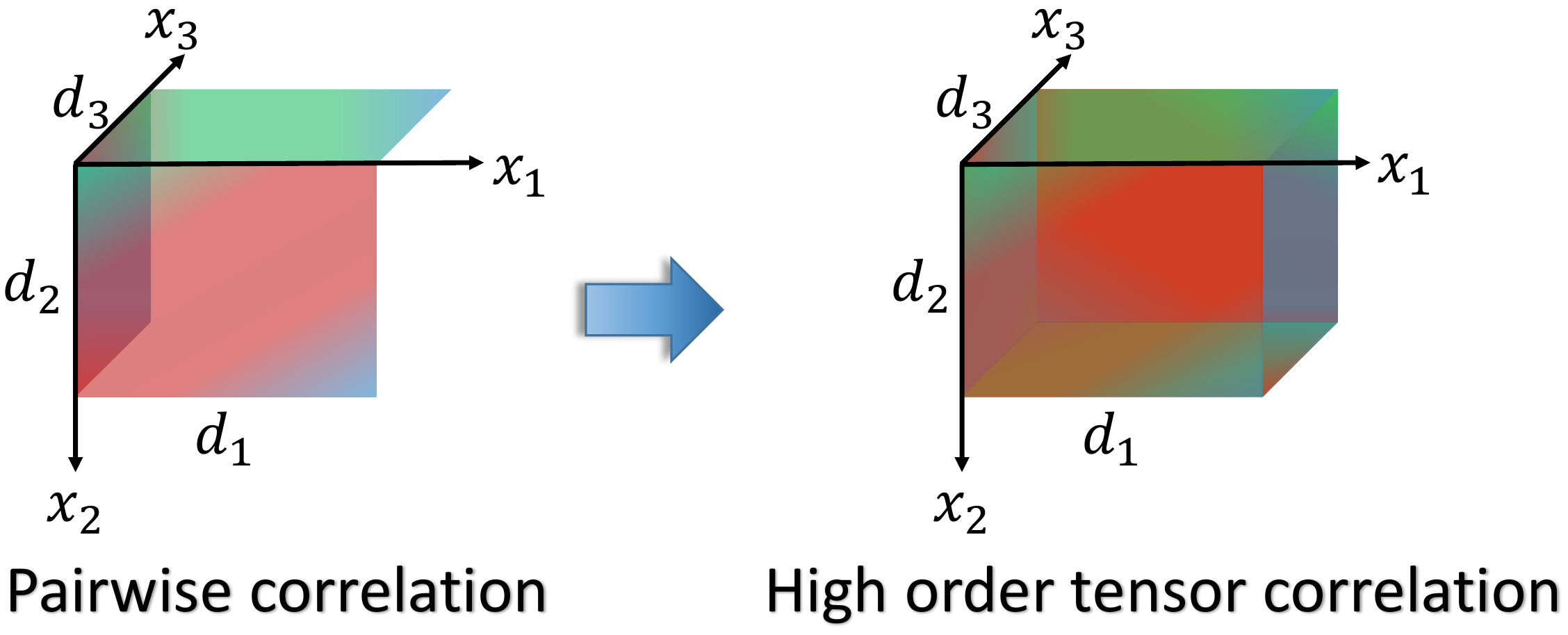}
\caption{The tensor CCA motivation. Only the pairwise correlation is explored in the traditional extensions of CCA, while much more information (i.e., the high order correlation) that can only be obtained by simultaneously examining all views is explored in the proposed TCCA.}
\label{fig:TCCA_Motivation}
\end{figure}

To tackle this problem, we develop tensor CCA (TCCA) to generalize CCA to handle an arbitrary number of views in a straightforward and yet natural way. In particular, TCCA aims to directly maximize the correlation between the canonical variables of all views, and this is achieved by analyzing the high-order covariance tensor over the data from all views. We prove that maximizing the correlation is equivalent to approximating the covariance tensor with a rank-1 tensor in an optimal least square sense. This approximation has been investigated in the literature and an efficient alternating least square (ALS) algorithm can be adopted for optimization \cite{PM-Kroonenberg-and-J-De-Leeuw-Psychometrika-1980, L-De-Lathauwer-et-al-HOPM-SIAM-2000, P-Comon-et-al-JoC-2009}. With respect to the traditional pairwise correlation maximization, the statistics (correlation information) explored can be measured using the $m(m-1)/2$ covariance matrices of size $¦¯(d^2)$, where $m$ is the number of views and $d$ represents the average feature dimensions, whereas in the proposed TCCA, the size of the covariance tensor is $¦¯(d^m)$. Fig. \ref{fig:TCCA_Motivation} is an illustrative example, where $m=3$. Much more correlation information is encoded in the common subspace shared by all features in multi-view dimension reduction, and thus hopefully better performance can be achieved. Furthermore, we extend the proposed TCCA to the non-linear case, which is useful when the feature dimensions are very high and limited instances are available. We perform extensive experiments on a variety of challenge tasks, including large scale biometric structure prediction, internet advertisement classification and web image annotation. We compare the proposed method with the traditional CCA \cite{DP-Foster-et-al-TR-TTI-2008} and its multi-view extension \cite{J-Via-et-al-NN-2007}, as well as two representative unsupervised multi-view dimension reduction approaches \cite{B-Long-et-al-SDM-2008, YH-Han-et-al-TCSVT-2012}. The results confirm the effectiveness of the proposed TCCA.

The article is organized as follows. We summarize closely related works in Section \ref{sec:Related_Work}. A brief introduction of CCA and its traditional multi-view extension is presented in Section \ref{sec:CCA_Generalization}. Section \ref{sec:TCCA} includes the description, formulation, and analysis of the proposed TCCA, as well as its non-linear extension kernel TCCA (KTCCA) for multi-view dimension reduction. Extensive experiments are presented in Section \ref{sec:Experiments} and the paper is concluded in Section \ref{sec:Conclusion}.

\section{Related Work}\label{sec:Related_Work}

\subsection{Multi-view Dimension Reduction}

Dimension reduction is a key technique in machine learning. The goal of dimension reduction is to find a low dimensional representation for high dimensional data \cite{T-Xia-et-al-TSMCB-2010}. Feature selection and feature transformation and the two main approaches for dimension reduction. The former aims to select a subset of variables from the original, while the latter transforms the data to a new space of fewer dimensions. The dimension reduction can be performed in an either unsupervised (e.g., principal component analysis (PCA) and Laplacian eigenmaps (LE) \cite{M-Belkin-and-P-Niyogi-NIPS-2001}), semi-supervised \cite{K-Benabdeslem-and-M-Hindawi-TKDE-2014}, or supervised (e.g., linear discriminant analysis (LDA)) setting, differed in the amount of labeled information being utilized.

In another research line, multi-view learning has attracted much attention recently. The multi-view we refer to here is the multiple feature representations of an object, not the spatial viewpoints in some other vision and graphics applications \cite{H-Su-et-al-ICCV-2009}. We generally classify the multi-view learning algorithms into three families: weighted view combination \cite{G-Lanckriet-et-al-JMLR-2004, B-Mcfee-and-G-Lanckriet-JMLR-2011}, multi-view dimension reduction \cite{DR-Hardoon-et-al-NCn-2004, M-White-et-al-NIPS-2012}, and view agreement exploration \cite{A-Blum-and-T-Mitchell-COLT-1998, A-Kumar-et-al-NIPS-2011}. Multi-view dimension reduction focuses on removing irrelevant or redundant information \cite{K-Benabdeslem-and-M-Hindawi-TKDE-2014} and reducing the feature dimension of data that consists of multiple views by leveraging the dependencies, coherence, and complementarity of those views. The different views are often assumed to be conditionally independent, thus a latent representation shared by all views can be obtained by exploiting the conditional independence structure of the multi-view data \cite{DP-Foster-et-al-TR-TTI-2008, B-Long-et-al-SDM-2008, M-White-et-al-NIPS-2012, YH-Han-et-al-TCSVT-2012, N-Chen-et-al-TPAMI-2012}. For example, canonical correlation analysis (CCA) is employed for multi-view dimension reduction in \cite{DP-Foster-et-al-TR-TTI-2008} to exploit the underlying conditional independence and redundancy assumption in multi-view learning. A general unsupervised learning method is presented in \cite{B-Long-et-al-SDM-2008} for multi-view data, where a consensus representation is learned by first applying dimension reduction technique (such as spectral embedding \cite{M-Belkin-and-P-Niyogi-NIPS-2001}) on each view and then combining the results via matrix factorization. In \cite{YH-Han-et-al-TCSVT-2012}, the structured sparsity \cite{R-Jenatton-et-al-JMLR-2011} is enforced among the different views in the learning of low-dimension consensus representation, to allow information being shared across subsets of features adaptively. In contrast to unsupervised multi-view dimension reduction, the similarity/dissimilarity pairwise constraints are utilized in \cite{CP-Hou-et-al-PR-2010} for semi-supervised multi-view dimension reduction. In \cite{N-Chen-et-al-TPAMI-2012}, the supervising information is also incorporated in the learned latent shared subspace by the use of a large-margin latent Markov network. In these methods, local optimal subspace can usually be obtained. Therefore, White et al. \cite{M-White-et-al-NIPS-2012} proposed a convex formulation for learning a shared subspace of multiple sources. In the learned subspace, conditional independence constraints are enforced.

\subsection{Canonical Correlation Analysis and Its Extensions}

Canonical correlation analysis (CCA), originally proposed by Hotelling (1936), finds bases for two random variables (or sets of variables) so that the coordinates of the variable pairs projected on these bases are maximally correlated \cite{DR-Hardoon-et-al-NCn-2004}. Much success has been achieved by applying CCA to pattern recognition and data mining. For example, SVM-2K was proposed in \cite{JDR-Farquhar-et-al-NIPS-2005} for two-view classification. It combines kernel CCA and support vector machine (SVM) in a single optimization problem, and the authors prove that the Rademacher complexity of SVM-2K is significantly lower than the individual SVMs. Kakade and Foster \cite{SM-Kakade-and-DP-Foster-COLT-2007} presented a multi-view regression algorithm regularized with a norm that is derived by applying CCA on unlabeled data. The authors show that the intrinsic dimension of the regression problem with the induced norm can be characterized by the correlation coefficients obtained in CCA. Under the conditionally uncorrelated assumption, a simple and efficient subspace learning algorithm based on CCA was proposed in \cite{K-Chaudhuri-et-al-ICML-2009} for multi-view clustering. The algorithm was shown to work well under much weaker separation conditions than the previous clustering methods.

In addition to these applications, there have been dozens of developments for CCA, most of which concentrate on inspecting the relationship between two sets of tensors rather than vectors. For example, the classical CCA was extended in \cite{SH-Lee-and-S-Choi-SPL-2007} to 2D-CCA, which directly analyzes 2D images without reshaping them into vectors. Some of its extensions are local 2D-CCA \cite{HX-Wang-SPL-2010}, sparse 2D-CCA \cite{JJ-Yan-et-al-SPL-2012}, and multilinear CCA (MCCA) \cite{HP-Lu-IJCAI-2013}. Considering that the two high-order tensors to be studied may share multiple modes (e.g., the video volume data), Kim and Cipolla \cite{TK-Kim-and-R-Cipolla-TPAMI-2009} presented two architectures for tensor correlation maximization by applying canonical transformation on the non-shared modes. In this way, features that have a good balance between flexibility and descriptive power may be obtained. This method is also termed ``tensor CCA'' (TCCA), but is quite different from the approach proposed in this paper. The main difference lies in that the latter focuses on analyzing two high-order tensor data sets, while our objective is to analyze the high-order statistics among multiple vector data sets (views).

The most closely related works to our methods, as far as we are concerned, are the maximum variance CCA (CCA-MAXVAR) \cite{JR-Kettenring-Biometrika-1971} and an adaptive CCA algorithm termed CCA-LS \cite{J-Via-et-al-NN-2007}, which is based on least square (LS) regression. The CCA-MAXVAR algorithm is performed by weighted combination of the canonical variables (projected vectors) of all views to approximate a latent common representation. This approach requires costly singular value decomposition (SVD) for optimization and cannot be trained in an adaptive fashion. To avoid these drawbacks, Via et al. \cite{J-Via-et-al-NN-2007} reformulated CCA-MAXVAR as a set of coupled LS regression problems, which seeks to minimize the distance between each pair of canonical variables. The reformulation is proved to be equivalent to the original CCA-MAXVAR formulation, but is much more efficient and can be learned adaptively. Nevertheless, there is still a disadvantage to both CCA-LS and CCA-MAXVAR, namely that only the pairwise correlations are exploited, while the high order correlations between all views are ignored. We developed the following tensor CCA framework to rectify this shortcoming.

\section{Canonical Correlation Analysis (CCA) and Its Multi-view Generalization}\label{sec:CCA_Generalization}

This section briefly introduces standard canonical correlation analysis (CCA) and its traditional generalizations on several data sets \cite{JR-Kettenring-Biometrika-1971, J-Via-et-al-NN-2007}. Given two sets of column vectors $\mathbf{x}_{1n} \in \mathbb{R}^{d_1}$, $\mathbf{x}_{2n} \in \mathbb{R}^{d_2}$, $n=1,\ldots,N$. The objective of CCA is to find a pair of projections (usually called canonical vectors) $\mathbf{h}_1$, $\mathbf{h}_2$, such that correlations between the two vectors of canonical variables $\mathbf{z}_1 \in \mathbb{R}^N$ and $\mathbf{z}_2 \in \mathbb{R}^N$ with each $z_{1n}=\mathbf{x}_{1n}^T \mathbf{h}_1$, $z_{2n}=\mathbf{x}_{2n}^T \mathbf{h}_2$, are maximized. The optimization problem is thus given by
\begin{equation}
\label{eq:CCA_Formulation}
\mathop{\mathrm{argmax}}_{\mathbf{z}_1,\mathbf{z}_2} \rho = \mathrm{corr}(\mathbf{z}_1, \mathbf{z}_2) = \frac{\mathbf{h}_1^T C_{12} \mathbf{h}_2}{\sqrt{\mathbf{h}_1^T C_{11} \mathbf{h}_1 \mathbf{h}_2^T C_{22} \mathbf{h}_2}},
\end{equation}
where $C_{11}=X_1 X_1^T$, $C_{22}=X_2 X_2^T$ are data variance matrices, and $C_{12}=X_1 X_2^T$ is the covariance matrix. Here, $X_1 \in \mathbb{R}^{d_1 \times N}$ and $X_2 \in \mathbb{R}^{d_2 \times N}$ are the stacked data matrices. The optimization of problem (\ref{eq:CCA_Formulation}) leads to the main solution of CCA, and the remaining solutions are given by maximizing the same correlation under the constraint of being orthogonal to the previous solutions.

CCA-MAXVAR \cite{JR-Kettenring-Biometrika-1971} generalizes CCA to $m$ views. Suppose the data matrix for the $p$'th view is $X_p \in \mathbb{R}^{d_p \times N}$, then the optimization problem of CCA-MAXVAR for finding the canonical vectors $\{\mathbf{h}_p\}_{p=1}^m$ is
\begin{equation}
\label{eq:CCA_MAXVAR_Formulation}
\begin{split}
\mathop{\mathrm{argmin}}_{\mathbf{z}, \mathbf{a}, \{\mathbf{h}_p\}_{p=1}^m} & \frac{1}{m} \sum_{p=1}^m \|\mathbf{z} - \alpha_p \mathbf{z}_p\|_2^2, \\
\mathrm{s.t.} & \ \|\mathbf{z}_p\|_2 = 1,
\end{split}
\end{equation}
where $\mathbf{z}_p=X_p^T \mathbf{h}_p$ is the vector of canonical variables, $\mathbf{z}$ is the best possible one-dimensional PCA representation, and $\mathbf{a}=[\alpha_1,\ldots,\alpha_m]^T$ is the vector of combination weights. To avoid a trivial solution, an additional constraint such as $\|\alpha_p\|_2^2=m$ is enforced. The solutions of (\ref{eq:CCA_MAXVAR_Formulation}) can be obtained using the SVD of $X_p$. To develop an efficient and adaptive algorithm, Via et al. \cite{J-Via-et-al-NN-2007} reformulated (\ref{eq:CCA_MAXVAR_Formulation}) as
\begin{equation}
\label{eq:CCA_LS_Formulation}
\begin{split}
\mathop{\mathrm{argmin}}_{\mathbf{z}, \mathbf{a}, \{\mathbf{h}_p\}_{p=1}^m} & \frac{1}{2m(m-1)} \sum_{p,q=1}^m \|X_p^T \mathbf{h}_p - X_q^T \mathbf{h}_q\|_2^2, \\
\mathrm{s.t.} & \ \frac{1}{m} \sum_{p=1}^m \mathbf{h}_p^T C_{pp} \mathbf{h}_p = 1.
\end{split}
\end{equation}
The orthogonal constraint $(z^{(i)})^T z^{(j)}=0, i \neq j$ is imposed on the different solutions, which can be obtained by using an iterative algorithm based on LS regression \cite{J-Via-et-al-NN-2007}. Here, $\mathbf{z}^{(i)}=\frac{1}{m} \sum_{p=1}^m \mathbf{z}_p^{(i)}$ and $\mathbf{z}_p^{(i)}$ is a vector of canonical variables projected using the $i$'th canonical vector in the $p$'th view.

\begin{figure*}[!t]
\centering
\includegraphics[width=0.9\linewidth]{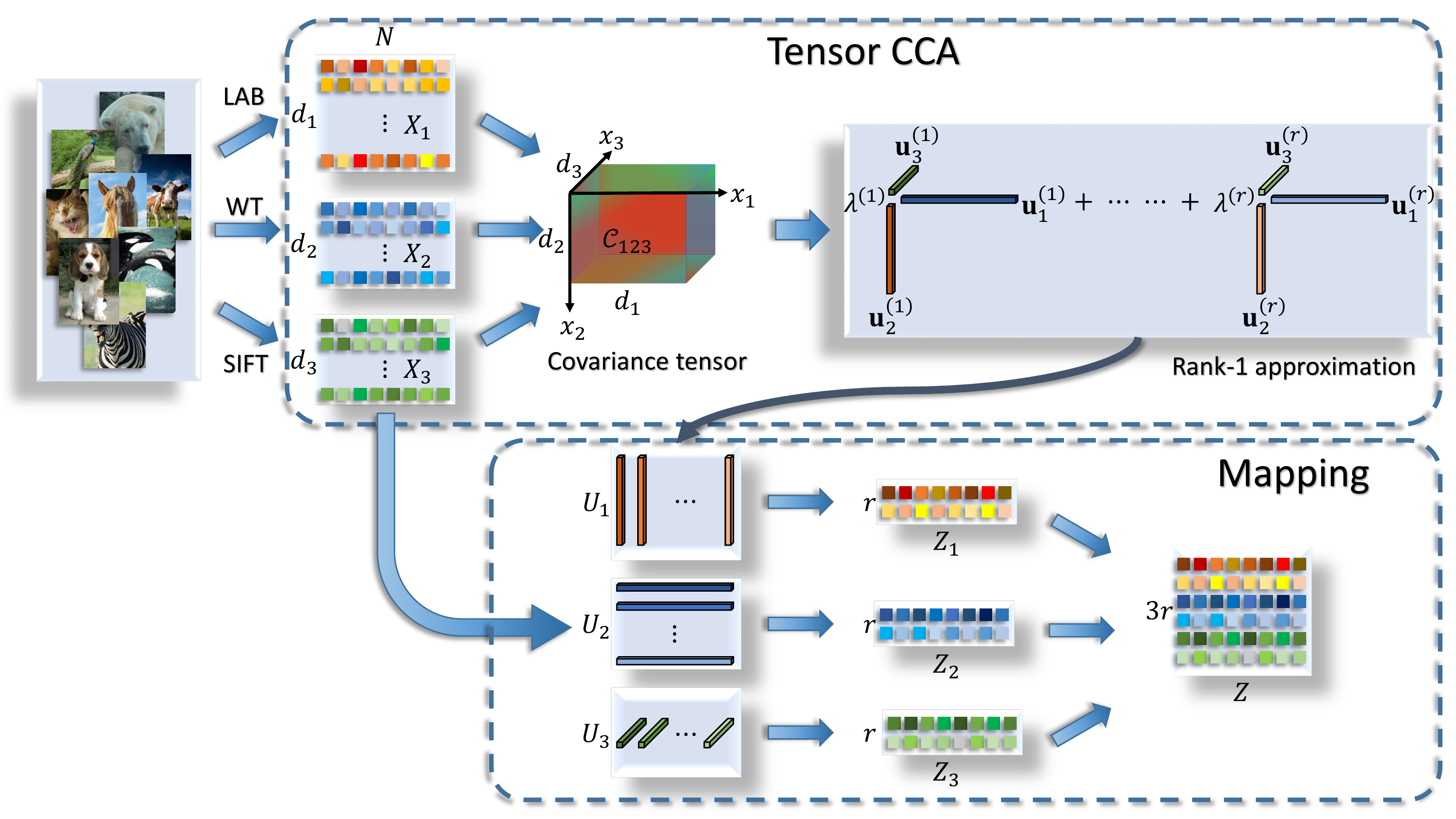}
\caption{System diagram of the multi-view dimension reduction method by the use of the proposed TCCA. Firstly, different kinds of features are extracted to represent the available instances in different views. Then a covariance tensor is calculated on the obtained representations $X_p, p = 1, \ldots ,m$ to discover the correlation information between all views. By approximating the covariance tensor with a set of rank-1 tensors, we obtain the transformation matrix $U_p$ for the $p$'th view. Each $U_p$ maps the original $X_p$ to the low dimensional $Z_p$ in the common subspace, and the final representation is a concatenation of $Z_p, p = 1, \ldots ,m.$}
\label{fig:System_Diagram}
\end{figure*}

\section{Tensor Canonical Correlation Analysis (TCCA)}\label{sec:TCCA}

In contrast to CCA-MAXVAR \cite{JR-Kettenring-Biometrika-1971} and CCA-LS \cite{J-Via-et-al-NN-2007}, where only the pairwise correlations are considered, we propose tensor CCA (TCCA) for multi-view dimension reduction by exploiting the high-order tensor correlation between all views. The diagram of the multi-view dimension reduction method using the proposed TCCA is shown in Fig. \ref{fig:System_Diagram}. Different kinds of features, such as LAB color histogram (LAB), wavelet texture (WT), and the local SIFT features (SIFT), are first extracted to represent the instances in different views. This leads to multiple feature matrices $\{ X_p \in \mathbb{R}^{d_p \times N} \}_{p=1}^m$. Here, $m$ is set at $3$ for intuitive illustration without loss of generality. The different sets of features are then used to calculate the data covariance tensor $C_{123}$, which is subsequently decomposed as a weighted sum of rank-1 tensors, i.e., $C_{123} \approx \sum_{k=1}^r \lambda^{(k)} \mathbf{u}_1^{(k)} \circ \mathbf{u}_2^{(k)} \circ \mathbf{u}_3^{(k)}$, where $r \leq \mathrm{min}(d_1,d_2,d_3)$ is the reduced dimension and $\circ$ is the tensor (outer) product. The vectors $\{ \mathbf{u}_p^{(k)} \}_{k=1}^r$ are stacked as a transformation matrix $U_p$, which is used to map the original high dimensional features into the low dimensional common subspace. The projected features $\{Z_p\}_{p=1}^m$ are concatenated as the final representation of the instances. The details of this technique are given below, but first we briefly introduce several useful notations and concepts of multilinear algebra.

\subsection{Notations}

Let $\mathcal{A}$ be an $m$-order tensor of size $I_1 \times I_2 \times \ldots \times I_m$, and $U$ be a $J_p \times I_p$ matrix. The $p$-mode product of $\mathcal{A}$ and $U$ is then denoted as $\mathcal{B}=\mathcal{A} \times_p U$, which is an $I_1 \times \ldots I_{p-1} \times J_p \times I_{p+1} \times I_m$ tensor with the element
\begin{equation}
\label{eq:TTM}
\begin{split}
& \mathcal{B}(i_1,\ldots,i_{p-1},j_p,i_{p+1},\ldots,i_m)
= \sum_{i_p=1}^{I_m} \mathcal{A}(i_1,i_2,\ldots,i_m) U(j_p,i_p).
\end{split}
\end{equation}
The product of $\mathcal{A}$ and a sequence of matrices $\{ U_p \in R^{J_p \times I_p} \}_{p=1}^m$ is a $J_1 \times J_2 \times \ldots \times J_m$ tensor denoted by
\begin{equation}
\label{eq:TTM_Sequence}
\mathcal{B}=\mathcal{A} \times_1 U_1 \times_2 U_2 \ldots \times_m U_m.
\end{equation}
The mode-$p$ matricization of $\mathcal{A}$ is denoted as an $I_p \times (I_1 I_{p-1} I_{p+1} I_m )$ matrix $A_{(p)}$, which is obtained by mapping the fibers associated with the $p$'th dimension of $\mathcal{A}$ as the rows of $A_{(p)}$, and aligning the corresponding fibers of all the other dimensions as the columns. Here, the columns can be ordered in any way. The $p$-mode multiplication $\mathcal{B} = \mathcal{A} \times_p U$ can be manipulated as matrix multiplication by storing the tensors in metricized form, i.e., $B_{(p)} = U A_{(p)}$. Specifically, the series of $p$-mode product in (\ref{eq:TTM_Sequence}) can be expressed as a series of Kronecker products and is given by
\begin{equation}
\label{eq:TTM_Sequence_Kronecker}
B_{(p)} = U_p A_{(p)} ( U^{(c_{p-1})} \otimes U^{(c_{p-2})} \otimes \ldots \otimes U^{(c_1)} )^T,
\end{equation}
where $\{ c_1,c_2,\ldots,c_L \} = \{ p+1,p+2,\ldots,m,1,2,\ldots,p-1 \}$ is a forward cyclic ordering for the indices of the tensor dimensions that map to the column of the matrix. Finally, the Frobenius norm of the tensor $\mathcal{A}$ is given by
\begin{equation}
\label{eq:Tensor_Frobenius_Norm}
\|\mathcal{A}\|_F^2 = \langle \mathcal{A},\mathcal{A} \rangle = \sum_{i_1=1}^{I_1} \sum_{i_2=1}^{I_2} \ldots \sum_{i_m=1}^{I_m} \mathcal{A}(i_1, i_2, \ldots, i_m)^2.
\end{equation}

\subsection{Problem Formulation}

Given $m$ views $\{X_p\}_{p=1}^m$ of $N$ instances, and each $X_p = [\mathbf{x}_{p1}, \mathbf{x}_{p2}, \ldots, \mathbf{x}_{pN}] \in \mathbb{R}^{d_p \times N}$ is assumed to have been centered (i.e., have zero mean). The variance matrices are then
\begin{equation}\notag
C_{pp} = \frac{1}{N} \sum_{n=1}^N \mathbf{x}_{pn} \mathbf{x}_{pn}^T, p = 1, \ldots, m,
\end{equation}
and the covariance tensor among all views is calculated as
\begin{equation}\notag
\mathcal{C}_{12 \ldots m} = \frac{1}{N} \sum_{n=1}^N \mathbf{x}_{1n} \circ \mathbf{x}_{2n} \circ \ldots \circ \mathbf{x}_{mn},
\end{equation}
where $\mathcal{C}$ is a tensor of dimension $d_1 \times d_2 \times \ldots \times d_m$. Following the objective of the traditional two-view CCA \cite{DR-Hardoon-et-al-NCn-2004}, the proposed tensor CCA seeks to maximize the correlation between the canonical variables $\mathbf{z}_p = X_p^T \mathbf{h}_p, p = 1,\ldots,m$, where $\{ \mathbf{h}_p \in \mathbb{R}^{d_p \times 1} \}_{p=1}^m$ are usually called the canonical vectors. Therefore, the optimization problem is
\begin{equation}
\label{eq:TCCA_Formulation}
\begin{split}
\mathop{\mathrm{argmax}}_{\{\mathbf{h}_p\}} \rho = & \mathrm{corr}(\mathbf{z}_1, \mathbf{z}_2, \ldots, \mathbf{z}_m) \\
\mathrm{s.t.} & \ \mathbf{z}_p^T \mathbf{z}_p = 1, p = 1, \ldots, m.
\end{split}
\end{equation}
Here $\mathrm{corr}( \mathbf{z}_1, \mathbf{z}_2, \ldots, \mathbf{z}_m ) = ( \mathbf{z}_1 \odot \mathbf{z}_2 \odot \ldots \odot \mathbf{z}_m )^T \mathbf{e}$ is the canonical correlation, and $\odot$ is the element-wise product, $\mathbf{e} \in \mathbb{R}^N$ is an all ones vector. We can prove that it is equivalent to $\mathcal{C}_{12 \ldots m} \times_1 \mathbf{h}_1^T \times_2 \mathbf{h}_2^T \ldots \times_m \mathbf{h}_m^T$, where $\times_p$ is the $p$-mode tensor-matrix product.
\begin{thm}
\label{thm:Canonical_Correlation}
The high order canonical correlation is given by
\begin{equation}
\label{eq:High_Order_Canonical_Correlation}
\rho = (\mathbf{z}_1 \odot \mathbf{z}_2 \odot \ldots \odot \mathbf{z}_m )^T \mathbf{e} = \mathcal{C}_{12 \ldots m} \times_1 \mathbf{h}_1^T \times_2 \mathbf{h}_2^T \ldots \times_m \mathbf{h}_m^T.
\end{equation}
\end{thm}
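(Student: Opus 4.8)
The plan is to expand both sides of \eqref{eq:High_Order_Canonical_Correlation} into scalar (coordinate) form and check that they coincide. For the left-hand side, note that $\mathbf{z}_p = X_p^T\mathbf{h}_p$ has $n$-th entry $z_{pn} = \mathbf{x}_{pn}^T\mathbf{h}_p$, so the $n$-th entry of the Hadamard product $\mathbf{z}_1\odot\mathbf{z}_2\odot\cdots\odot\mathbf{z}_m$ is $\prod_{p=1}^m z_{pn} = \prod_{p=1}^m \mathbf{x}_{pn}^T\mathbf{h}_p$; taking the inner product with the all-ones vector $\mathbf{e}$ then gives $\rho = \sum_{n=1}^N \prod_{p=1}^m \mathbf{x}_{pn}^T\mathbf{h}_p$. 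This step is a one-line computation from the definitions.

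The substantive step is to evaluate the multilinear contraction on the right. I would first establish the rank-one identity: for arbitrary $\mathbf{a}_p \in \mathbb{R}^{d_p}$,
\begin{equation}\notag
(\mathbf{a}_1 \circ \mathbf{a}_2 \circ \cdots \circ \mathbf{a}_m) \times_1 \mathbf{h}_1^T \times_2 \mathbf{h}_2^T \cdots \times_m \mathbf{h}_m^T = \prod_{p=1}^m \mathbf{h}_p^T\mathbf{a}_p ,
\end{equation}
which follows by applying the element-wise definition \eqref{eq:TTM} of the $p$-mode product once per mode: each row vector $\mathbf{h}_p^T$, viewed as a $1\times d_p$ matrix, collapses the $p$-th dimension to size $1$, the operations in distinct modes commute so the order of contraction is irrelevant, and the surviving $1\times1\times\cdots\times1$ tensor has exactly the claimed single entry; a clean presentation is induction on the number of modes already contracted. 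Then, using linearity of the $p$-mode product in its tensor argument to pull the summation in the definition of $\mathcal{C}_{12\ldots m}$ outside the whole chain of contractions,
\begin{equation}\notag
\begin{split}
\mathcal{C}_{12\ldots m} \times_1 \mathbf{h}_1^T \cdots \times_m \mathbf{h}_m^T
&= \frac{1}{N}\sum_{n=1}^N (\mathbf{x}_{1n}\circ\cdots\circ\mathbf{x}_{mn}) \times_1 \mathbf{h}_1^T \cdots \times_m \mathbf{h}_m^T \\
&= \frac{1}{N}\sum_{n=1}^N \prod_{p=1}^m \mathbf{h}_p^T\mathbf{x}_{pn},
\end{split}
\end{equation}
which matches the expression for $\rho$ obtained above, up to the $1/N$ normalization carried by the definition of $\mathcal{C}_{12\ldots m}$ (equivalently, absorb it by rescaling each $\mathbf{x}_{pn}$ by $N^{-1/m}$ so the two conventions agree; this scaling does not change the maximizer under the constraints $\mathbf{z}_p^T\mathbf{z}_p=1$).

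I do not expect a genuine obstacle: the entire content of the statement is the rank-one contraction identity together with multilinearity, and everything else is bookkeeping with the matricization and mode-product notation of Section~4.1. The only points that need a little care are (i) being explicit that $\mathbf{h}_p^T$ is treated as a $1\times d_p$ matrix, so that $\times_p \mathbf{h}_p^T$ really does produce a singleton mode and the end result is a scalar, and (ii) tracking the normalization constant so that the displayed equality is literally exact rather than exact only up to scale.
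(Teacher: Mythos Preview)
Your proposal is correct and follows essentially the same route as the paper: expand both sides in coordinates and match. The paper's proof works entrywise on $\mathcal{C}_{12\ldots m}$ and applies the mode-product definition directly, whereas you factor the argument into the rank-one contraction identity plus linearity; this is the same computation, just organized a bit more cleanly. You also correctly flag the $1/N$ normalization discrepancy---the paper's own proof silently drops that factor when writing $\mathcal{C}_{12\ldots m}(j_1,\ldots,j_m)=\sum_n \prod_p \mathbf{x}_{pn}(j_p)$, so your caveat (ii) is well taken.
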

The proof is presented in the Appendix. By further considering that $X_p X_p^T = C_{pp}, p = 1, \ldots ,m$, the problem (\ref{eq:TCCA_Formulation}) becomes
\begin{equation}
\label{eq:TCCA_Reformulation}
\begin{split}
\mathop{\mathrm{argmax}}_{\{\mathbf{h}_p\}} \rho = & \mathcal{C}_{12 \ldots m} \times_1 \mathbf{h}_1^T \times_2 \mathbf{h}_2^T \ldots \times_m \mathbf{h}_m^T \\
\mathrm{s.t.} & \ \mathbf{h}_p^T C_{pp} \mathbf{h}_p = 1, p = 1, \ldots, m.
\end{split}
\end{equation}
We further add a regularization term in the constraints to control the model complexity, and thus the constraints of problem (\ref{eq:TCCA_Reformulation}) become
\begin{equation}
\mathbf{h}_p^T (C_{pp} + \epsilon I) \mathbf{h}_p = 1, p = 1, \ldots, m,
\end{equation}
where $I$ is an identity matrix and $\epsilon$ is a nonnegative trade-off parameter. Let each $\mathbf{u}_p = \tilde{C}_{pp}^{1/2} \mathbf{h}_p$ and $\mathcal{M} = \mathcal{C}_{12 \ldots m} \times_1 \tilde{C}_{11}^{-1/2} \times_2 \tilde{C}_{22}^{-1/2} \ldots \times_m \tilde{C}_{mm}^{-1/2}$, we can reformulate (\ref{eq:TCCA_Reformulation}) as
\begin{equation}
\label{eq:TCCA_Equivalent_Reformulation}
\begin{split}
\mathop{\mathrm{argmax}}_{\{\mathbf{u}_p\}} \rho = & \mathcal{M} \times_1 \mathbf{u}_1^T \times_2 \mathbf{u}_2^T \ldots \times_m \mathbf{u}_m^T \\
\mathrm{s.t.} & \ \mathbf{u}_p^T \mathbf{u}_p = 1, p = 1, \ldots, m,
\end{split}
\end{equation}
where $\tilde{C}_{pp} = C_{pp} + \epsilon I$. The equivalence of the problem (\ref{eq:TCCA_Reformulation}) and (\ref{eq:TCCA_Equivalent_Reformulation}) is ensured by the following theorem.
\begin{thm}
\label{thm:TCCA_Equivalent_Reformulation}
The problems (\ref{eq:TCCA_Reformulation}) and (\ref{eq:TCCA_Equivalent_Reformulation}) are equivalent.
\end{thm}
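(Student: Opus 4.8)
The plan is to establish the equivalence by exhibiting an explicit bijection between the feasible sets of the two problems under which the objective value is preserved. The natural candidate is the change of variables $\mathbf{u}_p = \tilde{C}_{pp}^{1/2} \mathbf{h}_p$, which is already suggested in the text. Since $\tilde{C}_{pp} = C_{pp} + \epsilon I$ is symmetric positive definite (for $\epsilon > 0$, or for $\epsilon = 0$ provided each $C_{pp}$ is invertible), its principal square root $\tilde{C}_{pp}^{1/2}$ is well-defined, symmetric, positive definite, and invertible, so the map $\mathbf{h}_p \mapsto \mathbf{u}_p$ is a linear bijection on $\mathbb{R}^{d_p}$. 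First I would record this invertibility and symmetry as the basic fact that makes everything go through.

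Next I would check that the constraint sets correspond. Under $\mathbf{u}_p = \tilde{C}_{pp}^{1/2}\mathbf{h}_p$, we have $\mathbf{u}_p^T \mathbf{u}_p = \mathbf{h}_p^T \tilde{C}_{pp}^{1/2} \tilde{C}_{pp}^{1/2} \mathbf{h}_p = \mathbf{h}_p^T \tilde{C}_{pp} \mathbf{h}_p$, using the symmetry of the square root. Hence $\mathbf{u}_p^T\mathbf{u}_p = 1$ holds if and only if $\mathbf{h}_p^T \tilde{C}_{pp}\mathbf{h}_p = 1$, which is exactly the (regularized) constraint of problem (\ref{eq:TCCA_Reformulation}). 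So the feasible set of (\ref{eq:TCCA_Equivalent_Reformulation}) is the image of the feasible set of (\ref{eq:TCCA_Reformulation}) under the product map $(\mathbf{h}_1,\ldots,\mathbf{h}_m) \mapsto (\mathbf{u}_1,\ldots,\mathbf{u}_m)$.

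Then I would verify that the objective is invariant. Substituting $\mathbf{h}_p = \tilde{C}_{pp}^{-1/2}\mathbf{u}_p$ into the objective of (\ref{eq:TCCA_Reformulation}) and using the identity $(\mathcal{A}\times_p U)\times_p V = \mathcal{A}\times_p (VU)$ together with the associativity of mode products in distinct modes (which is the standard multilinear-algebra fact underlying (\ref{eq:TTM_Sequence})), we get
\begin{equation}\notag
\mathcal{C}_{12\ldots m}\times_1 \mathbf{h}_1^T \times_2 \mathbf{h}_2^T \cdots \times_m \mathbf{h}_m^T = \mathcal{C}_{12\ldots m}\times_1 (\mathbf{u}_1^T \tilde{C}_{11}^{-1/2}) \times_2 (\mathbf{u}_2^T \tilde{C}_{22}^{-1/2}) \cdots \times_m (\mathbf{u}_m^T \tilde{C}_{mm}^{-1/2}),
\end{equation}
where I have used $\mathbf{h}_p^T = (\tilde{C}_{pp}^{-1/2}\mathbf{u}_p)^T = \mathbf{u}_p^T \tilde{C}_{pp}^{-1/2}$ by symmetry of $\tilde{C}_{pp}^{-1/2}$. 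Regrouping the mode-$p$ factors, this equals $\big(\mathcal{C}_{12\ldots m}\times_1 \tilde{C}_{11}^{-1/2}\times_2 \tilde{C}_{22}^{-1/2}\cdots \times_m \tilde{C}_{mm}^{-1/2}\big)\times_1 \mathbf{u}_1^T \cdots \times_m \mathbf{u}_m^T = \mathcal{M}\times_1 \mathbf{u}_1^T \cdots \times_m \mathbf{u}_m^T$, the objective of (\ref{eq:TCCA_Equivalent_Reformulation}). Thus for every feasible point of one problem the corresponding point of the other is feasible with the same objective value, so the two problems have the same optimal value and their maximizers are in bijective correspondence; recovering $\mathbf{h}_p = \tilde{C}_{pp}^{-1/2}\mathbf{u}_p$ gives the solution of the original problem from that of the reduced one. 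I expect the only real subtlety to be bookkeeping: carefully justifying that mode products along different modes commute and that the scalar mode-$p$ contractions can be regrouped as claimed, i.e., that $(\mathcal{C}\times_p A)\times_p \mathbf{u}^T = \mathcal{C}\times_p(\mathbf{u}^T A)$ and that this is consistent with performing the contractions in any order — everything else is immediate from the positive-definiteness and symmetry of $\tilde{C}_{pp}^{1/2}$.
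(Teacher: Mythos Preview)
Your proposal is correct and follows essentially the same approach as the paper: verify that the change of variables $\mathbf{u}_p = \tilde{C}_{pp}^{1/2}\mathbf{h}_p$ carries the (regularized) constraints of (\ref{eq:TCCA_Reformulation}) to those of (\ref{eq:TCCA_Equivalent_Reformulation}) and preserves the objective. The only cosmetic difference is that the paper establishes the objective identity by matricizing and invoking the Kronecker-product formula (\ref{eq:TTM_Sequence_Kronecker}), whereas you work directly with the mode-product identity $(\mathcal{A}\times_p U)\times_p V = \mathcal{A}\times_p (VU)$ and the commutativity of contractions in distinct modes; both routes are standard and equivalent.
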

\begin{proof}
It is straightforward that the constraints of problems (\ref{eq:TCCA_Reformulation}) and (\ref{eq:TCCA_Equivalent_Reformulation}) are equivalent, and now we prove that the objective of the two problems is the same as follows,
\begin{equation}\notag
\begin{split}
& \mathcal{C}_{12 \ldots m} \times_1 \mathbf{h}_1^T \times_2 \mathbf{h}_2^T \times_m \mathbf{h}_m^T \\
= & \mathbf{h}_m^T C_{(m)} ( \mathbf{h}_{m-1} \otimes \ldots \otimes \mathbf{h}_2 \otimes \mathbf{h}_1 ) \\
= & \mathbf{u}_m^T \tilde{C}_{mm}^{-1/2} C_{(m)} ( (\tilde{C}_{m-1,m-1}^{-1/2} \mathbf{u}_{m-1}) \otimes \ldots \otimes (\tilde{C}_{11}^{-1/2} \mathbf{u}_1) ) \\
= & \mathbf{u}_m^T ( \tilde{C}_{mm}^{-1/2} C_{(m)} (\tilde{C}_{m-1,m-1}^{-1/2} \otimes \ldots \otimes \tilde{C}_{11}^{-1/2}) (\mathbf{u}_{m-1} \otimes \ldots \otimes \mathbf{u}_1) ) \\
= & \mathbf{u}_m^T \mathcal{M} ( \mathbf{u}_{m-1} \otimes \ldots \otimes \mathbf{u}_2 \otimes \mathbf{u}_1 ) \\
= & \mathcal{M} \times_1 \mathbf{u}_1^T \times_2 \mathbf{u}_2^T \ldots \times_m \mathbf{u}_m^T,
\end{split}
\end{equation}
where the metricizing property of the tensor-matrix product presented in (\ref{eq:TTM_Sequence_Kronecker}) and some basic properties of the Kronecker product are applied.
\end{proof}

\subsection{Solutions}

It has been presented in \cite{L-De-Lathauwer-et-al-HOPM-SIAM-2000} that the problem (\ref{eq:TCCA_Equivalent_Reformulation}) is equivalent to finding the best rank-$1$ approximation of the tensor $\mathcal{M}$, i.e., if we define $\hat{\mathcal{M}} = \rho \mathbf{u}_1 \circ \mathbf{u}_2 \circ \ldots \circ \mathbf{u}_m$, then the optimization problem becomes
\begin{equation}
\label{eq:TCCA_Optimization}
\mathop{\mathrm{argmin}}_{\{\mathbf{u}_p\}} \|\mathcal{M} - \hat{\mathcal{M}}\|_F^2.
\end{equation}
The solution can be obtained using the alternating least square (ALS) algorithm \cite{PM-Kroonenberg-and-J-De-Leeuw-Psychometrika-1980, P-Comon-et-al-JoC-2009}. Some other algorithms, such as the high-order power method (HOPM) \cite{L-De-Lathauwer-et-al-HOPM-SIAM-2000} and the tensor power method \cite{GI-Allen-AISTATS-2012}, can also be applied here for optimization, but our empirical findings indicate that the ALS algorithm performs the best in our experiments.

As in the two-view CCA, we perform a recursive maximization of the correlation between linear combinations of $X_p, p = 1, \ldots ,m$. However, we cannot expect the different linear combinations $\mathbf{h}_p^{(1)}, \ldots, \mathbf{h}_p^{(r)}$ of $X_p$ to be uncorrelated with each other, where $r$ is the rank of $\mathcal{M}$ (the determination of the rank value is still an open problem for the high-order tensors \cite{L-De-Lathauwer-et-al-HOSVD-SIAM-2000}). That is, the orthogonality constraints cannot be imposed on $\mathbf{u}_p^{(1)}, \ldots, \mathbf{u}_p^{(r)}$, since the sum of rank-1 decomposition and orthogonal decomposition of high-order tensors cannot be satisfied simultaneously \cite{L-De-Lathauwer-et-al-HOSVD-SIAM-2000}.

Based on the solutions $\mathbf{u}_p$, we obtain the canonical variables $\mathbf{z}_p = X_p^T \mathbf{h}_p = X_p^T \tilde{C}_{pp}^{-1/2} \mathbf{u}_p$. Let $U_p = [\mathbf{u}_p^{(1)}, \ldots, \mathbf{u}_p^{(r)}]$ and $\mathbf{z}_p^{(1)}, \ldots, \mathbf{z}_p^{(r)}$ be the column vectors of $Z_p$, we obtain the projected data for the $p$'th view:
\begin{equation}
\label{eq:Projected_Data}
Z_p = X_p^T \tilde{C}_{pp}^{-1/2} U_p.
\end{equation}
Following \cite{DP-Foster-et-al-TR-TTI-2008}, where it is suggested that the dimension be reduced to $2r$ in the standard CCA, we concatenate the different $\{ Z_p \}$ as the final representation $Z \in \mathbb{R}^{(mr) \times N}$ for the subsequent learning, such as classification \cite{JDR-Farquhar-et-al-NIPS-2005, D-Fisch-et-al-TKDE-2014}, clustering \cite{SM-Yang-et-al-TKDE-2014, JJ-Wu-et-al-TKDE-2015}, regression \cite{SM-Kakade-and-DP-Foster-COLT-2007}, search ranking \cite{B-Xu-et-al-TKDE-2015, HS-Zhu-et-al-TKDE-2015}, collaborative filtering \cite{Q-Liu-et-al-TKDE-2014}, and so on.

\subsection{Non-linear Extension}

The projections $\{ \mathbf{h}_p \}$ are linear in TCCA and thus may be not appropriate for instances that lie in quite non-linear feature space. To this end, we develop kernel tensor CCA (KTCCA) that extends the proposed TCCA to the non-linear case. KTCCA aims to find non-linear projections by first projecting the data into higher dimensional space induced by the feature mapping $\phi$:
\begin{equation}\notag
\phi(X_p) = [\phi(\mathbf{x}_{p1}), \phi(\mathbf{x}_{p2}), \ldots, \phi(\mathbf{x}_{pN})] \in \mathbb{R}^{D_p \times N},
\end{equation}
where the mapped dimension $D_p$ may be infinite. Then the variance matrices
\begin{equation}\notag
C_{pp} = \frac{1}{N} \sum_{n=1}^N \phi(\mathbf{x}_{pn}) \phi^T(\mathbf{x}_{pn}), p = 1, \ldots, m,
\end{equation}
the covariance matrix
\begin{equation}\notag
\mathcal{C}_{12 \ldots m} = \frac{1}{N} \sum_{n=1}^N \phi(\mathbf{x}_{1n}) \circ \phi(\mathbf{x}_{2n}) \circ \ldots \circ \phi(\mathbf{x}_{mn}),
\end{equation}
and the canonical variables $\mathbf{z}_p=\phi^T(X_p) \mathbf{h}_p$. It follows from the Representer Theorem \cite{B-Scholkopf-and-AJ-Smola-Book-MIT-2002} that $\mathbf{h}_p$ can be rewritten as a linear combination of the given instances, i.e.,
\begin{equation}
\label{eq:Representer_Theorem}
\mathbf{h}_p = \phi(X_p) \mathbf{a}_p,
\end{equation}
where $\mathbf{a}_p \in \mathbb{R}^{N \times 1}$ is a vector of the combination coefficients. The problem (\ref{eq:TCCA_Reformulation}) then becomes
\begin{equation}
\label{eq:KTCCA_Formulation}
\begin{split}
\mathop{\mathrm{argmax}}_{\{\mathbf{a}_p\}} \rho = & \mathcal{K}_{12 \ldots m} \times_1 \mathbf{a}_1^T \times_2 \mathbf{a}_2^T \ldots \times_m \mathbf{a}_m^T \\
\mathrm{s.t.} & \ \mathbf{a}_p^T K_{pp} \mathbf{h}_p = 1, p = 1, \ldots, m,
\end{split}
\end{equation}
where $K_{pp} = \phi^T(X_p) \phi(X_p)$ is the kernel matrix of the $p$'th view. The derivation is similar to Theorem \ref{thm:TCCA_Equivalent_Reformulation}. Here $\mathcal{K}_{12 \ldots m} = \mathcal{C}_{12 \ldots m} \times_1 \phi^T(X_1) \times_2 \phi^T(X_2) \ldots \times_m \phi^T(X_m)$ and can be calculated according to the following theorem.
\begin{thm}
\label{thm:Tensor_Kernel_Calculation}
The following equality holds:
\begin{equation}\notag
\begin{split}
& \mathcal{C}_{12 \ldots m} \times_1 \phi^T(X_1) \times_2 \phi^T(X_2) \ldots \times_m \phi^T(X_m)
= \frac{1}{N} \sum_{n=1}^N \mathbf{k}_{1n} \circ \mathbf{k}_{2n} \circ \ldots \circ \mathbf{k}_{mn},
\end{split}
\end{equation}
in which $\mathbf{k}_{pn} = \phi^T(X_p) \phi(\mathbf{x}_{pn})$, i.e., the $n$'th column of the kernel matrix $K_{pp}$, $p=1,\ldots,m$.
\end{thm}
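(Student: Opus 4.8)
The plan is to exploit two elementary properties of the $p$-mode tensor--matrix product: its linearity in the tensor argument, and its action on rank-1 tensors. First I would recall that the covariance tensor in the feature space is itself a scaled sum of $N$ rank-1 tensors,
\[
\mathcal{C}_{12 \ldots m} = \frac{1}{N} \sum_{n=1}^N \phi(\mathbf{x}_{1n}) \circ \phi(\mathbf{x}_{2n}) \circ \ldots \circ \phi(\mathbf{x}_{mn}),
\]
so it suffices to understand how the chain of mode products $\times_1 \phi^T(X_1) \times_2 \phi^T(X_2) \ldots \times_m \phi^T(X_m)$ transforms a single term $\phi(\mathbf{x}_{1n}) \circ \ldots \circ \phi(\mathbf{x}_{mn})$.

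Second, I would invoke the standard identity that for a rank-1 tensor and a matrix $U$ acting on mode $p$,
\[
(\mathbf{v}_1 \circ \ldots \circ \mathbf{v}_p \circ \ldots \circ \mathbf{v}_m) \times_p U = \mathbf{v}_1 \circ \ldots \circ (U \mathbf{v}_p) \circ \ldots \circ \mathbf{v}_m,
\]
which follows directly from the definition (\ref{eq:TTM}) by writing out the generic entry as $\sum_{i_p} \big( \prod_q v_q(i_q) \big) U(j_p, i_p) = \big( \prod_{q \neq p} v_q(i_q) \big) \big( \sum_{i_p} v_p(i_p) U(j_p, i_p) \big)$. Since mode products on distinct modes commute, applying this identity successively for $p = 1, \ldots, m$ turns $\phi(\mathbf{x}_{1n}) \circ \ldots \circ \phi(\mathbf{x}_{mn})$ into $\big( \phi^T(X_1) \phi(\mathbf{x}_{1n}) \big) \circ \ldots \circ \big( \phi^T(X_m) \phi(\mathbf{x}_{mn}) \big)$, which equals $\mathbf{k}_{1n} \circ \ldots \circ \mathbf{k}_{mn}$ by the definition $\mathbf{k}_{pn} = \phi^T(X_p) \phi(\mathbf{x}_{pn})$.

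Finally, I would combine the two observations: using linearity of each $\times_p$ in its tensor slot, the chain of mode products distributes over the sum defining $\mathcal{C}_{12 \ldots m}$, and the scalar $1/N$ pulls through, yielding $\mathcal{C}_{12 \ldots m} \times_1 \phi^T(X_1) \ldots \times_m \phi^T(X_m) = \frac{1}{N} \sum_{n=1}^N \mathbf{k}_{1n} \circ \ldots \circ \mathbf{k}_{mn}$, which is the claim. There is no serious obstacle here; the only points deserving explicit mention are that mode-$p$ products on different modes commute, so the order in which the $\phi^T(X_p)$ are applied is immaterial, and that the product is linear in the tensor argument, both immediate from (\ref{eq:TTM}). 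If one prefers to avoid the rank-1 lemma, an alternative is to push everything through the mode-$m$ matricization via (\ref{eq:TTM_Sequence_Kronecker}), exactly as in the proof of Theorem \ref{thm:TCCA_Equivalent_Reformulation}, and then recognize the resulting Kronecker-product expression as the matricization of $\frac{1}{N} \sum_n \mathbf{k}_{1n} \circ \ldots \circ \mathbf{k}_{mn}$; I would present the rank-1 route since it is shorter and more transparent.
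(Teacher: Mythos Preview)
Your proof is correct and is essentially the same argument as the paper's, just organized more modularly: the paper carries the sum $\sum_{n=1}^N$ through a direct entrywise computation of $(\mathcal{C}\times_p \phi^T(X_p))(i_1,\ldots,j_p,\ldots,i_m)$ and then compares with the entries of the right-hand side, whereas you first invoke linearity to reduce to a single rank-1 term and then apply the rank-1 identity $(\mathbf{v}_1\circ\cdots\circ\mathbf{v}_m)\times_p U=\mathbf{v}_1\circ\cdots\circ(U\mathbf{v}_p)\circ\cdots\circ\mathbf{v}_m$. The underlying calculation is identical; your packaging is a bit cleaner and makes the role of linearity and of mode-commutativity explicit.
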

We give the proof in the Appendix. To avoid trivial learning, we follow \cite{DR-Hardoon-et-al-NCn-2004} and introduce a partial least square (PLS) term to penalize the norms of the weight vectors $\{ \mathbf{a}_p \}$. That is, the constraints of problem (\ref{eq:KTCCA_Formulation}) become
\begin{equation}
\mathbf{a}_p^T (K_{pp}^2 + \epsilon K_{pp}) \mathbf{a}_p = 1, p = 1, \ldots, m.
\end{equation}
Because the matrix $(K_{pp}^2 + \epsilon K_{pp})$ is positive definite, it has a unique Cholesky decomposition, and we can denote its decomposition as $(K_{pp}^2 + \epsilon K_{pp}) = L_p^T L_p$. Let $\mathbf{b}_p = L_p \mathbf{a}_p$ and $\mathcal{S} = \mathcal{K}_{12 \ldots m} \times_1 (L_1^{-1})^T \times_2 (L_2^{-1})^T \ldots \times_m (L_m^{-1})^T$, we can reformulate (\ref{eq:KTCCA_Formulation}) as
\begin{equation}
\label{eq:KTCCA_Reformulation}
\begin{split}
\mathop{\mathrm{argmax}}_{\{\mathbf{b}_p\}} \rho = & \mathcal{S} \times_1 \mathbf{b}_1^T \times_2 \mathbf{b}_2^T \ldots \times_m \mathbf{b}_m^T \\
\mathrm{s.t.} & \ \mathbf{b}_p^T \mathbf{b}_p = 1, p = 1, \ldots, m,
\end{split}
\end{equation}
Similar to TCCA, this problem is equivalent to finding the best rank-$1$ approximation of $\mathcal{S}$, and the solution can be found using the ALS algorithm. By recursively maximizing the correlation, we obtain $\mathbf{b}_p^{(1)}, \ldots, \mathbf{b}_p^{(r)}$. Let $B_p = [\mathbf{b}_p^{(1)}, \ldots, \mathbf{b}_p^{(r)}]$, and the canonical variables $\mathbf{z}_p = \phi^T(X_p) \mathbf{h}_p = \phi^T(X_p) \phi(X_p) \mathbf{a}_p = K_{pp} L_p^{-1} \mathbf{b}_p$ and the projected data for the $p$'th view are then
\begin{equation}
\label{eq:NonLin_Projected_Data}
Z_p = K_pp L_p^{-1} B_p.
\end{equation}
The concatenated $Z \in \mathbb{R}^{(mr) \times N}$ is the final representation of the instances.

\subsection{Complexity Analysis}\label{subsec:Complexity_Analysis}

The time and space complexities of the proposed TCCA model are both closely related to the size of tensor $\mathcal{M}$. Straightforwardly, the space complexity is $O(d_1 d_2 \ldots d_m)$. Because the tensor $\mathcal{M}$ can be calculated offline, the time complexity is dominated by the rank-$r$ decomposition using the ALS algorithm. Considering that it is common that $r \ll \mathrm{min}(d_1, d_2, \ldots, d_m)$, we can speculate the time complexity of ALS is $O(t r d_1 d_2 \ldots d_m)$ according to \cite{P-Comon-et-al-JoC-2009}, where the time cost of the ALS algorithm for the three modes tensor is presented. Here, $t$ is the number of iterations in ALS.

According to the above analysis, we can see that the complexity of TCCA is independent of the number of instances, and thus our method can be scaled in very large sample size problems. Similarly, the complexities of KTCCA are determined by the tensor $\mathcal{S}$, the size of which is $N^m$. The space and time complexities are $O(N^m)$ and $O(trN^m)$ respectively. This means that KTCCA is capable of being scaled in problems that have very high feature dimensions and a small number of instances.

\section{Experiments}\label{sec:Experiments}

In this section, we empirically validate the effectiveness of the proposed TCCA on a biometric structure prediction and an advertisement classification problem following \cite{DP-Foster-et-al-TR-TTI-2008}, as well as on a challenging web image annotation task \cite{TS-Chua-et-al-CIVR-2009}. In all of the following experiments, five random choices of the labeled instances are used. Twenty percent of the test data (or unlabeled data in the transductive setting) are used for validation, which means that the parameters (if not specified) corresponding to the best performance on the validation set are used for testing. The evaluation criterion is the classification accuracy.

\subsection{Evaluation of the Linear Formulation}

In the first two sets of experiments (biometric structure prediction and advertisement classification), we use regularized least squares (RLS) as the base learner following \cite{DP-Foster-et-al-TR-TTI-2008}. Given $N_l$ labeled instances $\{ (\mathbf{x}_n,y_n) \}_{n=1}^{N_l}$, the optimization problem for RLS is given by $\mathop{\mathrm{argmin}}_\mathbf{w} \frac{1}{N_l} \sum_{n=1}^{N_l} (\mathbf{w}^T \mathbf{x}_n - y_n)^2 + \gamma \|\mathbf{w}\|_2^2$, where the positive trade-off parameter $\gamma$ is is set as $10^{-2}$ according to \cite{DP-Foster-et-al-TR-TTI-2008}. 
A constant feature of $1$ is appended to each instance to include a bias term in $\mathbf{w}$. In web image annotation, the $k$-nearest-neighbor ($k$NN) classifier is utilized, where the candidate set for $k$ is $\{ 1,2,\ldots,10 \}$. Specifically, we compare the following methods:
\begin{itemize}
  \item \textbf{BSF:} using the single view feature that achieves the best performance in RLS/$k$NN-based classification.
  \item \textbf{CAT:} concatenating the normalized features of all the views into a long vector, and then performing RLS/$k$NN-based classification.
  \item \textbf{CCA \cite{DP-Foster-et-al-TR-TTI-2008}:} using the CCA formulation presented in \cite{DP-Foster-et-al-TR-TTI-2008} to find a common representation of two different views. In this formulation, a regularization term $\epsilon I$ is added to control the model complexity, and we set the parameter $\epsilon$ as $10^{-2}$ in biometric structure prediction and advertisement classification according to \cite{DP-Foster-et-al-TR-TTI-2008}. The parameter is tuned over the set $\{ 10^i|i=-5,\ldots,4 \}$ in web image annotation. The implementation details can be found in \cite{DP-Foster-et-al-TR-TTI-2008}. For $m$ different views, there are $m(m-1)/2$ subsets of two views. The subset that achieves the best performance is termed \textbf{CCA (BST)}. To combine the results of all subsets, we average their predicted scores in RLS-based classification and adopt the majority voting strategy in $k$NN. This combination approach is termed \textbf{CCA (AVG)}.
  \item \textbf{CCA-LS \cite{J-Via-et-al-NN-2007}:} a generalization of CCA to multiple views based on least square (LS) regression.
  \item \textbf{DSE \cite{B-Long-et-al-SDM-2008}:} a general and popular unsupervised multi-view dimension reduction method based on spectral embedding.
  \item \textbf{SSMVD \cite{YH-Han-et-al-TCSVT-2012}:} a recently proposed unsupervised multi-view dimension reduction method based on the structured sparsity-inducing norm \cite{R-Jenatton-et-al-JMLR-2011}.
  \item \textbf{TCCA:} the proposed tensor CCA. The regularization parameter $\epsilon$ is optimized the same as in CCA.
\end{itemize}
In the first step of DSE and SSMVD, PCA is taken as the dimension reduction method for each view, and the result dimension (of each view) is set to be $100$ empirically.

\subsubsection{Biometric Structure Prediction}

The dataset used in this set of experiments is SecStr\footnote{\url{http://www.kyb.tuebingen.mpg.de/ssl-book}}, which is a benchmark dataset for evaluating semi-supervised systems \cite{O-Chapelle-et-al-Book-MIT-2006}. The task associated with this dataset is ``to predict the secondary structure of a given amino acid in a protein based on a sequence window centered around that amino acid'' \cite{O-Chapelle-et-al-Book-MIT-2006}. The SecStr dataset is large-scale and contains $84K$ instances. We randomly select $100$ instances as labeled samples. There are also $1200K$ unlabeled instances which we use to observe the performance of three CCA-based methods (CCA, CCA-LS and TCCA) with respect to different amounts of unlabeled data. Following \cite{DP-Foster-et-al-TR-TTI-2008}, all the provided data are used (as unlabeled instances) to find the common subspace in the CCA-based methods. The performance is evaluated in a transductive setting on the unlabeled samples (except those for validation) of the $84K$ instances. Both DSE and SSMVD are naturally transductive, since they learn the low-dimensional representation of given data directly, and no projection matrix is learned for new data. Therefore, these two methods cannot handle very large datasets and the experiments are conducted only on the $84K$ instances. In particular, DSE needs to solve an eigen-decomposition problem of size $N \times N$. The time cost or memory cost is intolerable when $N$ is $84K$, and thus a subset of $10K$ samples are utilized.

The features provided are $15$ categorical attributes, each of which is generated at a position in $[-7,+7]$ from the sequence window of amino acid, and represented by a $21$-dimensional sparse binary vector. We divided the $315 (15 \times 21)$ features into three views:
\begin{itemize}
  \item View-1: attributes based on the left context (positions in $[-7,-3]$);
  \item View-2: attributes based on the current position and middle context (positions in $[-2,2]$);
  \item View-3: attributes based on the right context (positions in $[3,7]$).
\end{itemize}
The dimension of each view is $105$.

\begin{figure}[!t]
\centering
  \subfigure{\includegraphics[width=0.45\linewidth]{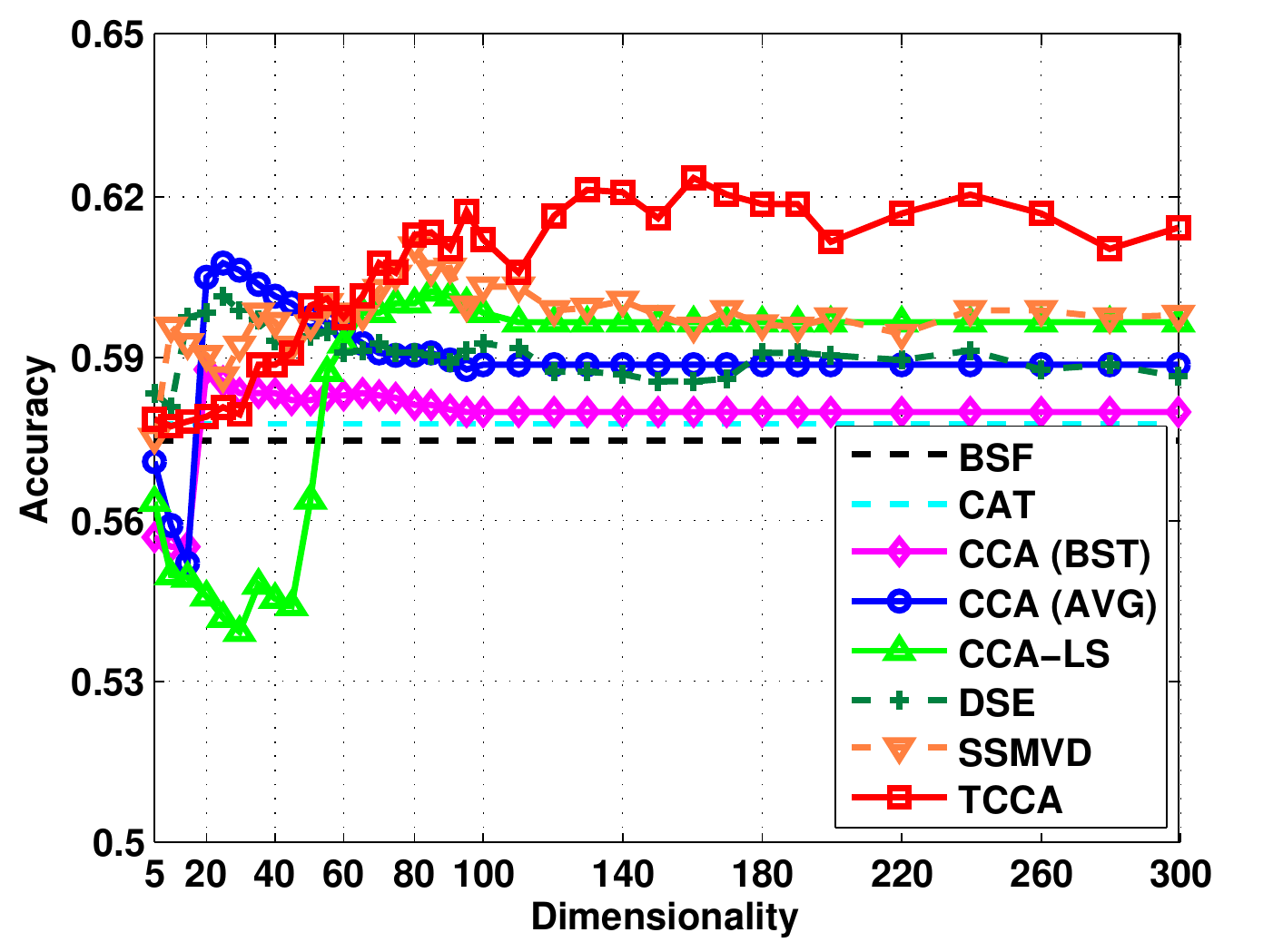}
  \label{subfig:Acc_vs_Dim_Lin_SecStr100_84K}
  }
  \hfil
  \subfigure{\includegraphics[width=0.45\linewidth]{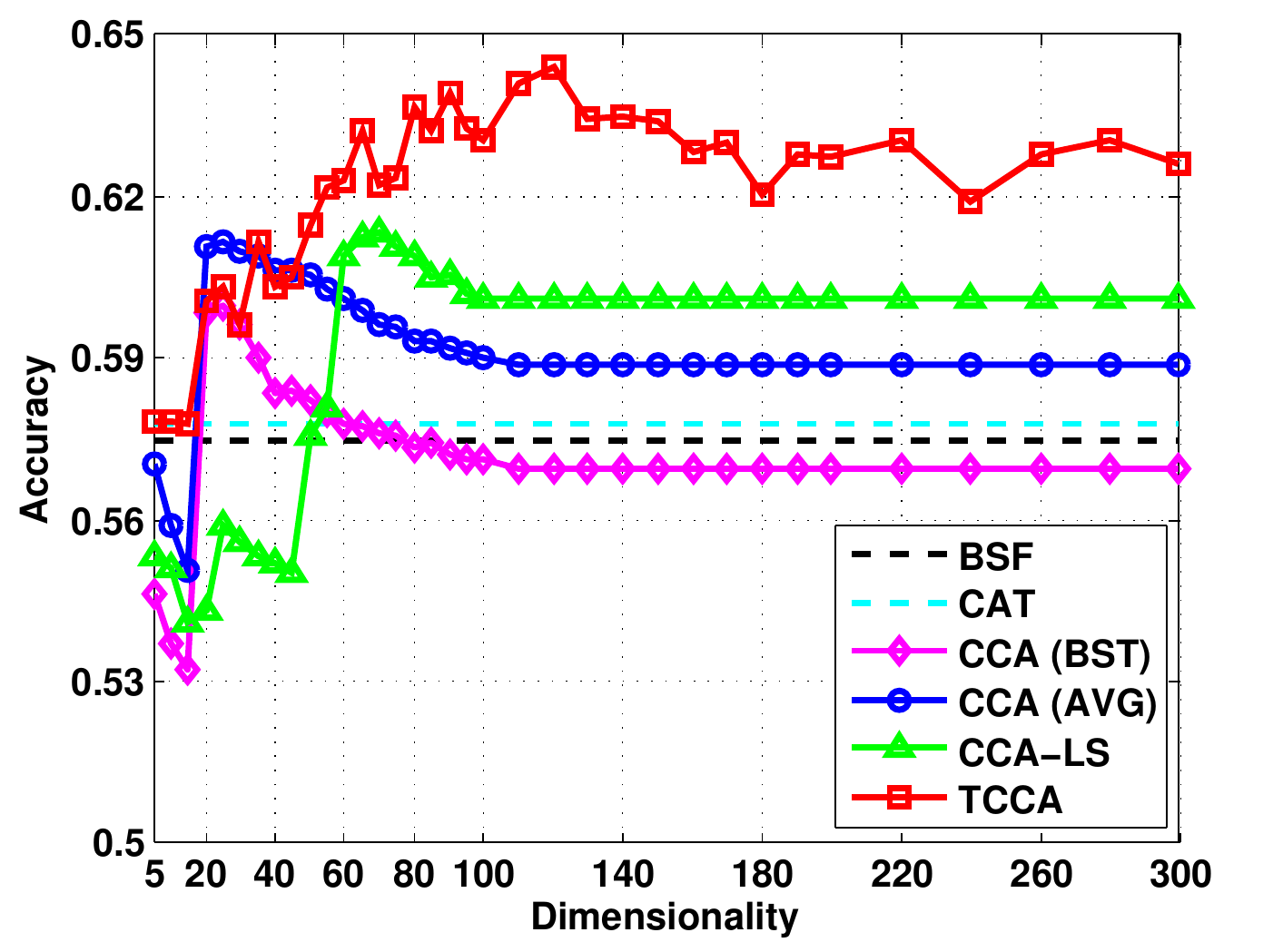}
  \label{subfig:Acc_vs_Dim_Lin_SecStr100_1M}
  }
  \caption{Prediction accuracy vs. dimension of the common subspace on the SecStr dataset. (Top: $100$ labeled instances and $84K$ unlabeled instances; Bottom: $100$ labeled instances and the entire unlabeled set (about $1.3M$ instances).)}
\label{fig:Acc_vs_Dim_Lin_SecStr100}
\end{figure}

\begin{table}[!t]
\renewcommand{\arraystretch}{1.3}
\caption{Prediction accuracies ($\%$) of the different methods at their best dimensions on the SecStr dataset ($100$ labeled instances).}
\label{tab:Acc_vs_Dim_Lin_SecStr100}
\centering
\begin{tabular}{|c||c|c|}
\hline
Methods & \#unlabeled = $84K$ & \#unlabeled = $1.3M$ \\
\hline
BSF & \multicolumn{2}{c|}{57.48$\pm$1.90} \\
\hline
CAT & \multicolumn{2}{c|}{57.77$\pm$2.03} \\
\hline \hline
CCA (BST) & 58.78$\pm$2.97 & 59.97$\pm$2.46 \\
\hline
CCA (AVG) & 60.75$\pm$1.92 & 61.15$\pm$1.73 \\
\hline
CCA-LS & 60.23$\pm$1.70 & 61.32$\pm$1.65 \\
\hline
DSE & 60.15$\pm$0.81 & \multirow{2}{*}{No Attempt} \\
\cline{1-2}
SSMVD & 61.08$\pm$1.58 & \\
\hline
TCCA & \textbf{62.36$\pm$1.27} & \textbf{64.42$\pm$1.70} \\
\hline
\end{tabular}
\end{table}

The performance of the compared methods in relation to the dimension of the common subspace is shown in Fig. \ref{fig:Acc_vs_Dim_Lin_SecStr100}. Accuracy is averaged over $5$ runs for each dimension $r$ in $\{ 5,10, \ldots, 100,110, \ldots ,200,220, \ldots, 300 \}$. The performance of the different methods at their best dimensions are summarized in Table \ref{tab:Acc_vs_Dim_Lin_SecStr100}. From the results, we observe that: 1) the concatenation strategy (CAT) is comparable to and slightly better than the strategy of only using the best single view features (BSF); 2) by learning the common subspace, all the compared multi-view dimension reduction methods are significantly better than the BSF and CAT baselines, if the dimensionalities are properly set according to the accuracy on the validation dataset. In particular, CCA (BST) is superior to CAT, although only a subset of two views is utilized in the former; 3) the accuracy of all three CCA-based methods increases with an increasing number of unlabeled data. By combining the results of different subsets, CCA (AVG) is better than CCA (BST); 4) CCA-LS is superior to CCA (BST), but their performance at their best dimension is comparable. When the number of unlabeled data is $84K$, DSE and SSMVD are comparable to CCA (BST) and CCA-LS respectively; 5) the performance of TCCA does not decease significantly as CCA-LS and CCA do when the number of dimensions is high. The main reason is that the ALS algorithm used in TCCA seeks to maximize the canonical correlations for all the $r$ factors simultaneously, but not to greedily find orthogonal decomposition components \cite{GI-Allen-AISTATS-2012}. That is, the main variance tends to be explained uniformly by all factors, not only by the first several factors. This is also the reason why there are some oscillations in TCCA; 6) the proposed TCCA significantly outperforms all the other methods on most dimensionalities. This demonstrates that the high order correlation information between all features is well discovered, and that exploring this kind of information is much better than only exploring the correlation information between pairs of features, as in CCA-LS.

\subsubsection{Advertisement Classification}

This set of experiments is conducted on the Ads (internet advertisements)\footnote{\url{http://archive.ics.uci.edu/ml/datasets/Internet+Advertisements}} dataset from the well-known UCI Machine Learning Repository. The task is to predict whether or not a given hyperlink (associated with an image) is an advertisement. There are $3,279$ instances in this dataset. We randomly choose $100$ instances as labeled training samples, and all the instances except those for validation are utilized as unlabeled samples to find the common subspace. The performance is evaluated in a transductive setting on the unlabeled samples.

We use the features as described in \cite{N-Kushmerick-ICAA-1999}, and omit the attributes that have missing values, such as the height (and width) of the image. The remained attributes are represented by binary ($1/0$) features which indicate the presence/absence of corresponding terms. For CCA-LS and TCCA, we divide all these features into three views as follows:
\begin{itemize}
  \item View-1: features based on the terms in the image¡¯s URL, caption, and alt text. $588$ dimensions;
  \item View-2: features based on the terms in the URL of the current site. $495$ dimensions;
  \item View-3: features based on the terms in the anchor URL. $472$ dimensions.
\end{itemize}

\begin{figure}[!t]
\centering
\includegraphics[width=0.5\linewidth]{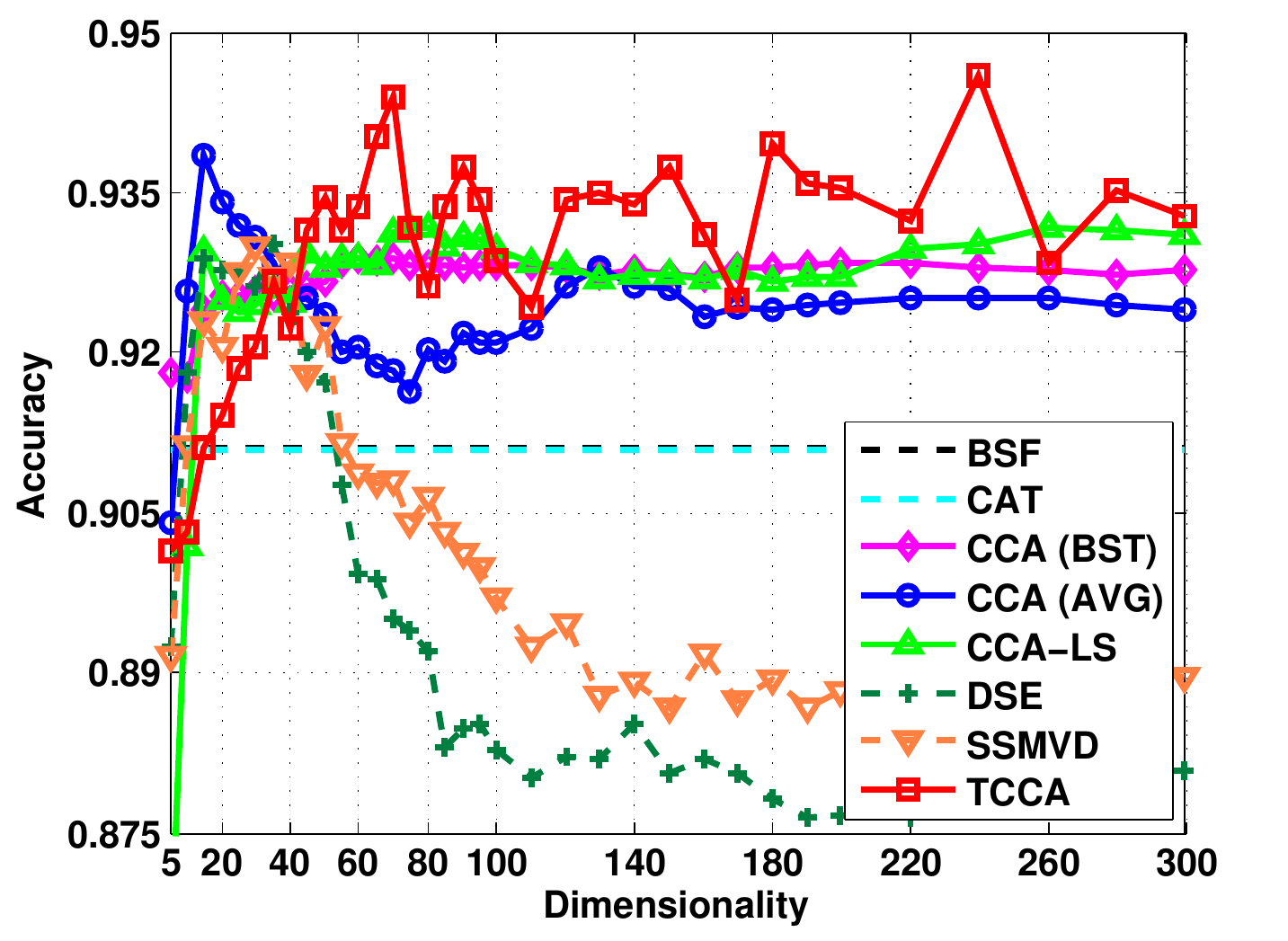}
\caption{Classification accuracy vs. dimension of the common subspace on the Ads dataset. $100$ labeled training samples are utilized.}
\label{fig:Acc_vs_Dim_Lin_Ads}
\end{figure}

\begin{table}[!t]
\renewcommand{\arraystretch}{1.3}
\caption{Classification accuracies ($\%$) of the different methods at their best dimensions on the Ads dataset.}
\label{tab:Acc_vs_Dim_Lin_Ads}
\centering
\begin{tabular}{|c||c|}
\hline
Methods & \#labeled = 100 \\
\hline
BSF & 91.10$\pm$1.65 \\
\hline
CAT & 91.08$\pm$1.74 \\
\hline \hline
CCA (BST) & 92.88$\pm$1.11 \\
\hline
CCA (AVG) & 93.84$\pm$0.85 \\
\hline
CCA-LS & 93.17$\pm$1.10 \\
\hline
DSE & 93.01$\pm$0.96 \\
\cline{1-2}
SSMVD & 92.99$\pm$0.91 \\
\hline
TCCA & \textbf{94.59$\pm$0.27} \\
\hline
\end{tabular}
\end{table}

Fig. \ref{fig:Acc_vs_Dim_Lin_Ads} shows the classification accuracy of the compared methods (in relation to the dimension $r$), and the accuracies at their best dimensions are summarized in Table \ref{tab:Acc_vs_Dim_Lin_Ads}. In contrast to the observations of the last set of experiments, we can see that: 1) the accuracy of the concatenation strategy (CAT) and the best single view (BSF) are almost the same. The performance of CAT is relatively worse since the feature dimension in this set of experiments is high ($1,555$ dimensions), and over-fitting occurs given the limited number of labeled samples; 2) the performance of DSE and SSMVD first increase and then decrease sharply with an increasing number of the dimension $r$, while the CCA-based methods are much steady; 3) the improvement of TCCA compared with the other CCA-based methods is not as great as in the last set of experiments. This is because we need more samples to approximate the true underlying high order correlation compared with the traditional pairwise correlation, since there are more variables to be estimated in the high order statistics. The unlabeled instances utilized in this set of experiments are much fewer, thus the high order correlation information is not well explored. CCA-LS is only comparable to CCA for the same reason.

\subsubsection{Web Image Annotation}

We further verify the effectiveness of the proposed algorithm on a natural image dataset NUS-WIDE \cite{TS-Chua-et-al-CIVR-2009}. This dataset contains $269,648$ images, and our experiments are conduct on a subset that consists of $11,189$ images belonging to $10$ mammal concepts: bear, cat, cow, dog, elk, fox, horse, tiger, whale, and zebra. We randomly split the images into a training set of $5,597$ images and a test set of $5,592$ images. Distinguishing between these concepts is very challenging, since many of them are similar to each other, e.g., cat and tiger. We randomly choose $\{ 4,6,8 \}$ labeled instances for each concept in the training set, and all the training instances are utilized as unlabeled samples to find the common subspace.

In this dataset, we choose three types of visual feature, namely $500$-D bag of visual words based on SIFT \cite{DG-Lowe-IJCV-2004} descriptors, $144$-D color auto-correlogram, and $128$-D wavelet texture, to represent each image \cite{TS-Chua-et-al-CIVR-2009}.

The annotation performance of the compared methods is shown in Fig. \ref{fig:Acc_vs_Dim_Lin_NUS} and Table \ref{tab:Acc_vs_Dim_Lin_NUS}. It can be seen from the results that: 1) in general, performance improves with an increased number of labeled instances; 2) CCA-LS is comparable to CCA (BST) and CCA (AVG), while the best performance (peak of the curve) of CCA-LS is usually higher; 3) the performance of DSE is poor when $r$ is large, while SSMVD is much steady and can be superior to CCA (AVG) and CCA-LS sometimes; 4) the accuracies of CCA (AVG) and CCA-LS first increase and then decrease with an increasing number of the dimension $r$, while the results of the proposed TCCA are satisfactory even though $r$ is large; 3) the accuracy of TCCA is significantly better than that of all the other methods under most dimensionalities.

\begin{figure*}[!t]
\centering
  \subfigure{\includegraphics[width=0.32\linewidth]{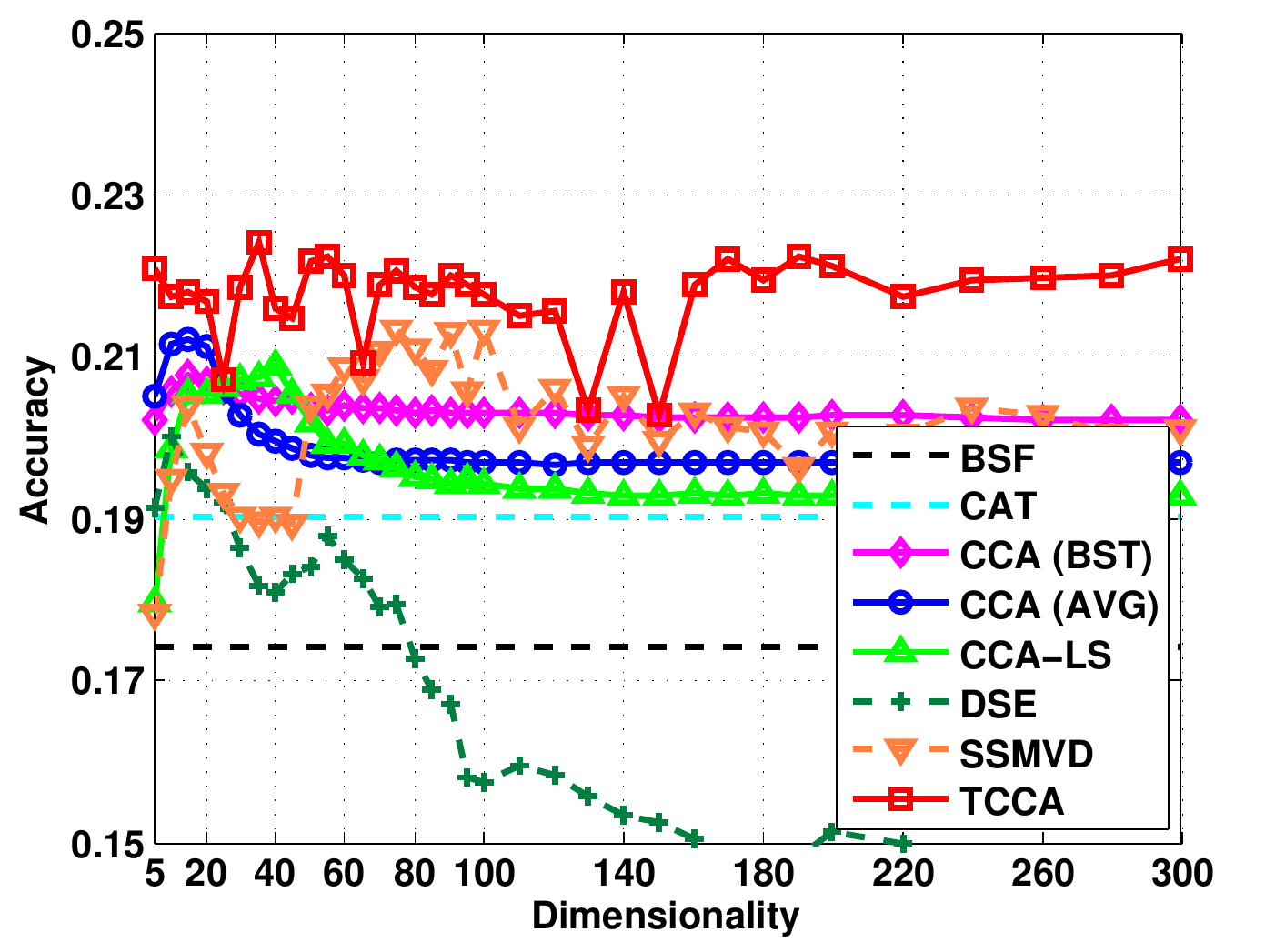}
  \label{subfig:Acc_vs_Dim_Lin_NUS4}
  }
  \hfil
  \subfigure{\includegraphics[width=0.32\linewidth]{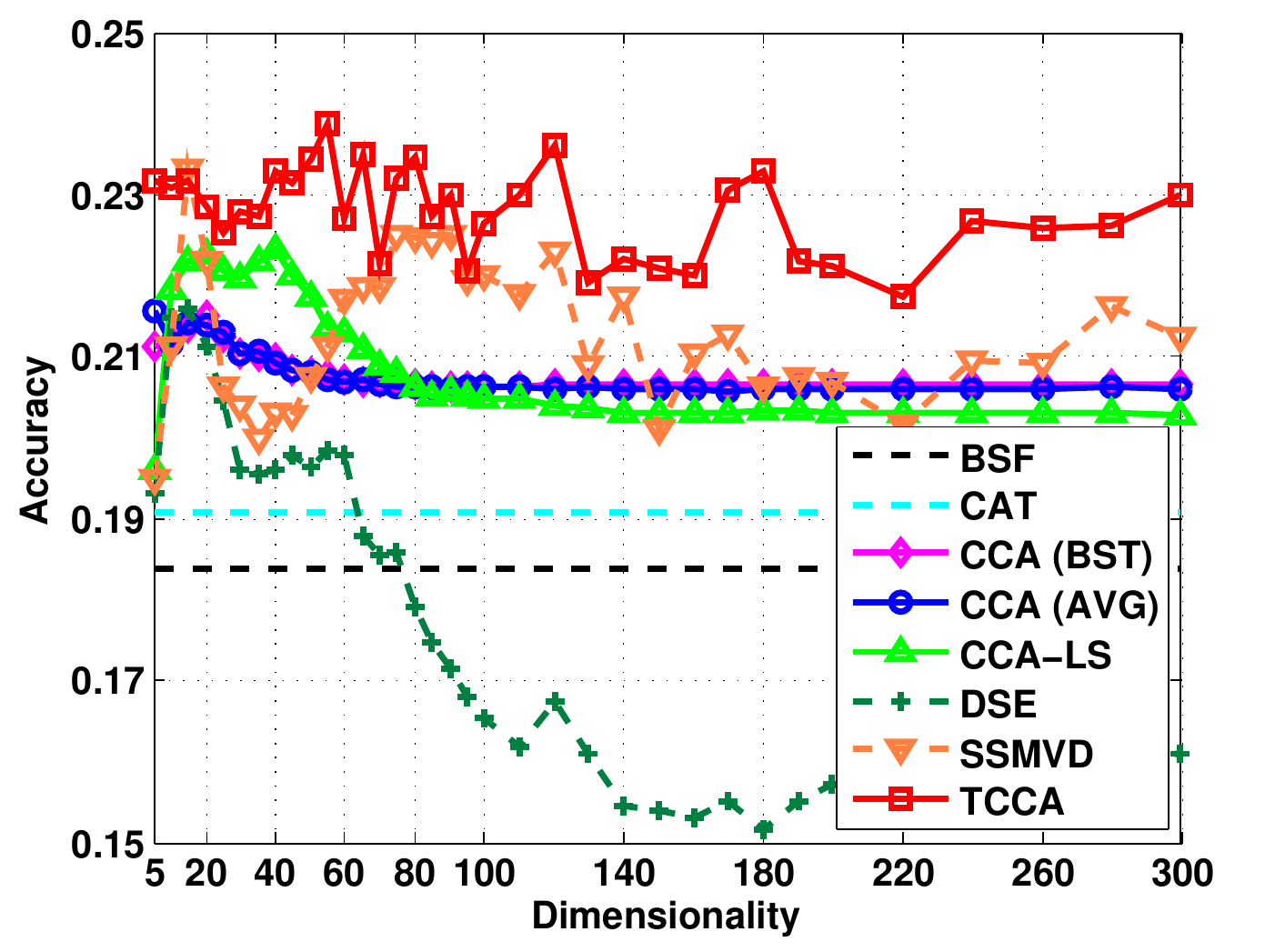}
  \label{subfig:Acc_vs_Dim_Lin_NUS6}
  }
  \hfil
  \subfigure{\includegraphics[width=0.32\linewidth]{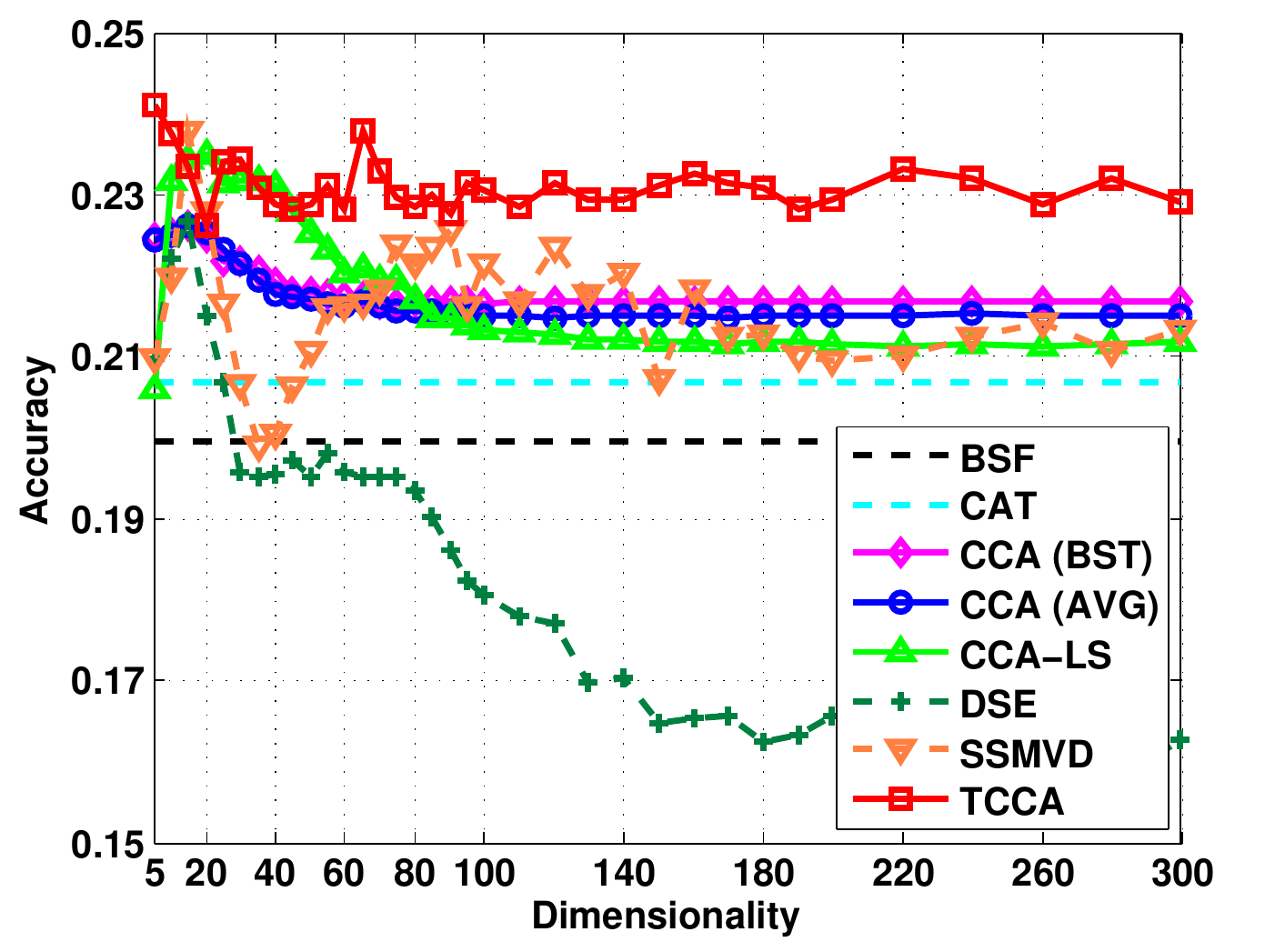}
  \label{subfig:Acc_vs_Dim_Lin_NUS8}
  }
  \caption{Anotation accuracy vs. dimension of the common subspace on the NUS-WIDE mammal subset. (Left: 4 labeled instances for each mammal concept; Middle: 6 labeled instances; Right: 8 labeled instances.)}
\label{fig:Acc_vs_Dim_Lin_NUS}
\end{figure*}

\begin{table}[!t]
\renewcommand{\arraystretch}{1.3}
\caption{Annotation accuracies ($\%$) of the different methods at their best dimensions on the NUS-WIDE mammal dataset.}
\label{tab:Acc_vs_Dim_Lin_NUS}
\centering
\begin{tabular}{|c||c|c|c|}
\hline
Methods & \#labeled = $4$ & \#labeled = $6$ & \#labeled = $8$ \\
\hline
BSF & 17.42$\pm$1.37 & 18.37$\pm$1.21 & 19.96$\pm$1.19 \\
\hline
CAT & 19.01$\pm$1.86 & 19.07$\pm$2.23 & 20.70$\pm$1.44 \\
\hline \hline
CCA (BST) & 20.77$\pm$1.52 & 21.51$\pm$2.38 & 22.61$\pm$1.76 \\
\hline
CCA (AVG) & 21.21$\pm$1.47 & 21.57$\pm$2.04 & 22.61$\pm$1.21 \\
\hline
CCA-LS & 20.90$\pm$1.84 & 22.31$\pm$2.53 & 23.50$\pm$2.48 \\
\hline
DSE & 20.02$\pm$1.23 & 21.59$\pm$1.13 & 22.67$\pm$0.74 \\
\hline
SSMVD & 21.34$\pm$2.08 & 23.32$\pm$1.08 & 23.79$\pm$1.30 \\
\hline
TCCA & \textbf{22.40$\pm$1.96} & \textbf{23.86$\pm$1.41} & \textbf{24.11$\pm$0.32} \\
\hline
\end{tabular}
\end{table}

\subsection{Evaluation of the Non-linear Extension}

We evaluate the non-linear extension of the proposed TCCA in the web image annotation task. As discussed in Section \ref{subsec:Complexity_Analysis}, the non-linear extension is able to handle the small sample size problem, where the feature dimensions can be very high and possibly infinite. We thus randomly choose a small set of $500$ samples from the animal subset. To perform the non-linear classification, we construct a kernel for each kind of feature. The kernel is defined by
\begin{equation}\notag
k(\mathbf{x}_i, \mathbf{x}_j) = \mathrm{exp}(-\lambda^{-1} d(\mathbf{x}_i, \mathbf{x}_j)),
\end{equation}
where $d(\mathbf{x}_i, \mathbf{x}_j)$ denotes the distance between $\mathbf{x}_i$ and $\mathbf{x}_j$, and $\lambda = \mathrm{max}_{i,j} d(\mathbf{x}_i, \mathbf{x}_j)$. We choose the $\chi^2$ distance for the visual word histogram. For other features, the $L2$ distance is utilized. Specifically, we compare the following methods:
\begin{itemize}
  \item \textbf{BSK:} using the single view kernel that achieves the best performance in the $k$NN-based classification.
  \item \textbf{AVG:} averaging the normalized kernels of all the views, and then performing $k$NN-based classification.
  \item \textbf{KCCA \cite{DR-Hardoon-et-al-NCn-2004}:} using the KCCA formulation presented in \cite{DR-Hardoon-et-al-NCn-2004} to find a common representation of two different views. The regularization parameter is optimized over the set $\{ 10^i| i=-7, \ldots,2 \}$. The setup of \textbf{KCCA (BST)} and \textbf{KCCA (AVG)} are similar as \textbf{CCA (BST)} and \textbf{CCA (AVG)} in the experiments of the linear version.
  \item \textbf{KTCCA:} the non-linear extension of the proposed tensor CCA. The regularization parameter $\epsilon$ is optimized in the same way as in KCCA.
\end{itemize}

The experimental results are shown in Fig. \ref{fig:Acc_vs_Dim_Ker_NUS} and Table \ref{tab:Acc_vs_Dim_Ker_NUS}. Compared with the results in Fig. \ref{fig:Acc_vs_Dim_Lin_NUS}, we can see that: 1) although a small number of unlabeled samples is utilized, the performance is better since the separability is improved by the non-linear projection, which is implemented via the kernel trick \cite{J-Shawe-Taylor-and-N-Cristianini-Book-Cambridge-2004}; 2) the simple AVG view combination strategy outperforms the best single view kernel (BSK) significantly, and is comparable to KCCA (BST); 3) KCCA (AVG) is slightly better than KCCA (BST), and the proposed KTCCA achieves the best performance under most dimensionalities.

\begin{figure*}[!t]
\centering
  \subfigure{\includegraphics[width=0.32\linewidth]{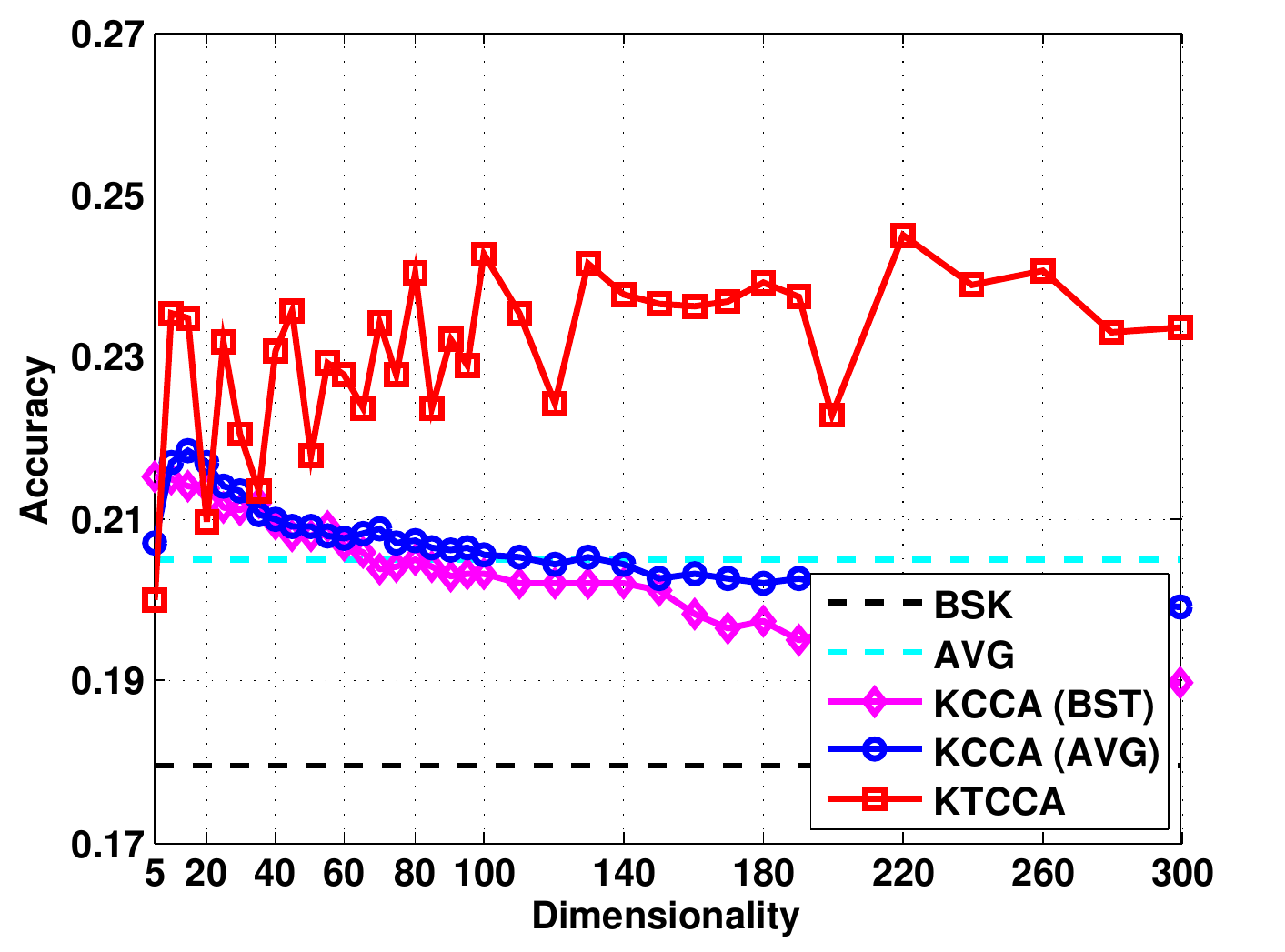}
  \label{subfig:Acc_vs_Dim_Ker_NUS4}
  }
  \hfil
  \subfigure{\includegraphics[width=0.32\linewidth]{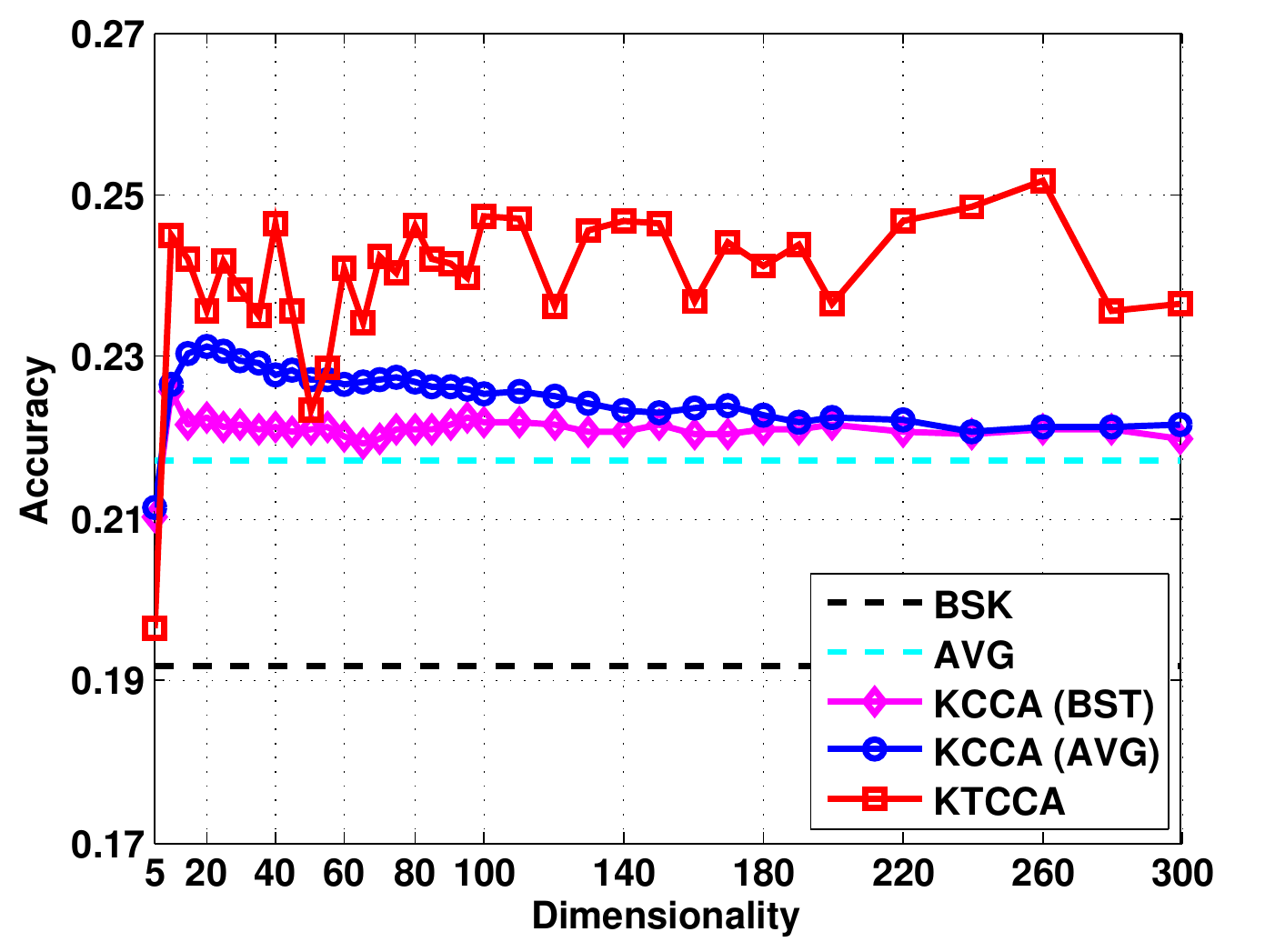}
  \label{subfig:Acc_vs_Dim_Ker_NUS6}
  }
  \hfil
  \subfigure{\includegraphics[width=0.32\linewidth]{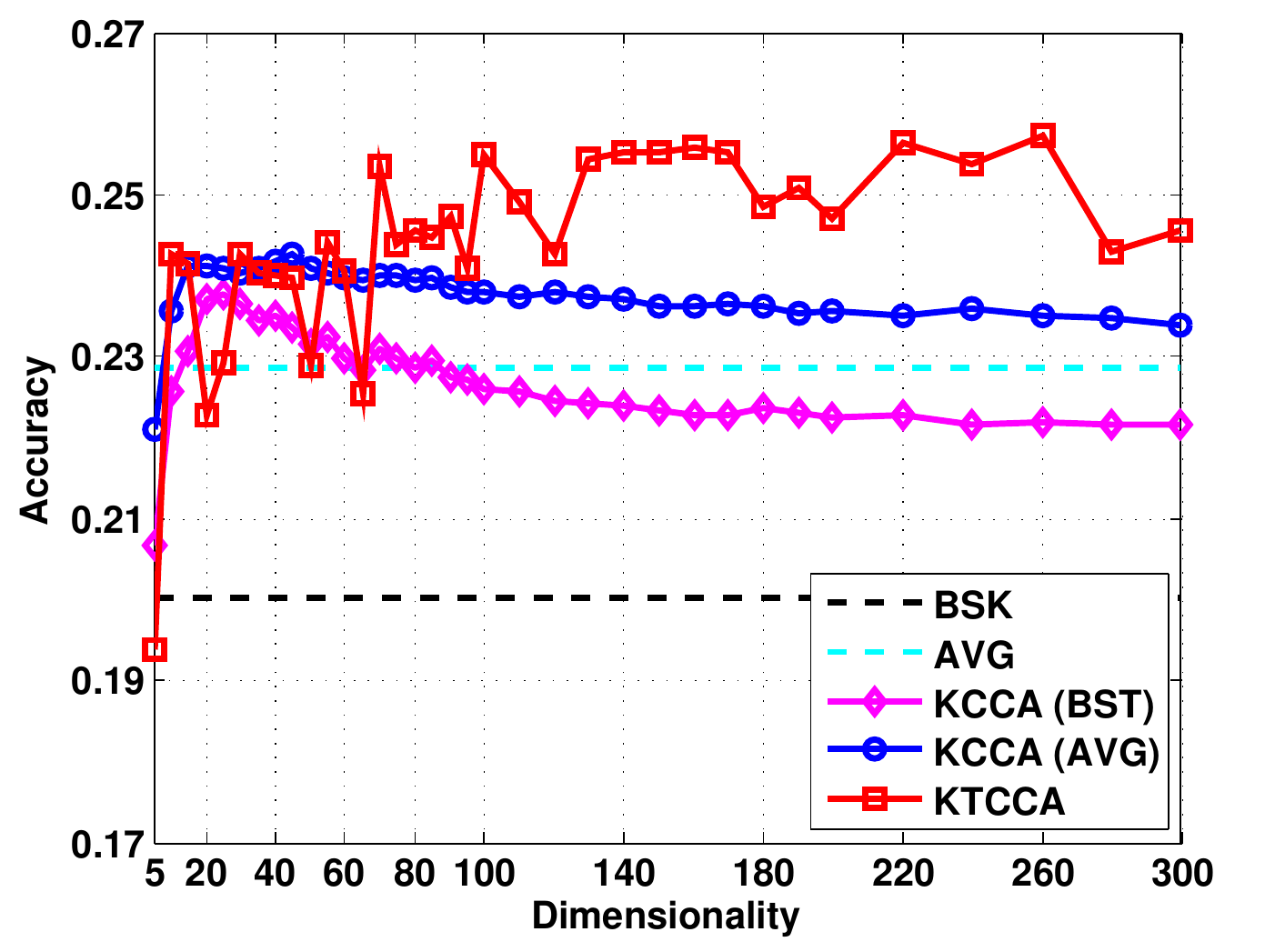}
  \label{subfig:Acc_vs_Dim_Ker_NUS8}
  }
  \caption{Annotation accuracy (of the non-linear methods) vs. dimension of the common subspace on the NUS-WIDE mammal subset, where a small set of $500$ samples is utilized. (Left: 4 labeled instances for each mammal concept; Middle: 6 labeled instances; Right: 8 labeled instances.)}
\label{fig:Acc_vs_Dim_Ker_NUS}
\end{figure*}

\begin{table}[!t]
\renewcommand{\arraystretch}{1.3}
\caption{Annotation accuracies ($\%$) of the different non-linear methods at their best dimensions on the NUS-WIDE mammal dataset.}
\label{tab:Acc_vs_Dim_Ker_NUS}
\centering
\begin{tabular}{|c||c|c|c|}
\hline
Methods & \#labeled = $4$ & \#labeled = $6$ & \#labeled = $8$ \\
\hline
BSK & 17.96$\pm$1.29 & 19.17$\pm$2.01 & 20.04$\pm$1.66 \\
\hline
AVG & 20.49$\pm$1.65 & 21.73$\pm$2.74 & 22.86$\pm$1.87 \\
\hline \hline
KCCA (BST) & 21.51$\pm$2.44 & 22.58$\pm$1.91 & 23.78$\pm$1.57 \\
\hline
KCCA (AVG) & 21.85$\pm$1.38 & 23.13$\pm$1.77 & 24.28$\pm$1.04 \\
\hline
KTCCA & \textbf{24.51$\pm$0.78} & \textbf{25.18$\pm$0.58} & \textbf{25.74$\pm$0.90} \\
\hline
\end{tabular}
\end{table}

\subsection{Empirical analysis of the computational complexity}

In this subsection, we empirically analyze the computational complexity of the different methods. The experiments are conducted in Matlab R2012b on a $2 \times 3.33$ GHz Intel Xeon ($6$ cores) computer, where the memory is $48$GB $1333$MHz ECC DDR3-RAM. The results (time cost and memory cost) on the different datasets are shown in Fig. \ref{fig:Cst_vs_Dim_Lin_SecStr}-\ref{fig:Cst_vs_Dim_Ker_NUS}. From the results, we observe that: 1) the costs of the proposed TCCA are higher than the other CCA-based methods in general. This is because the decomposition is performed on a large $d_1 \times d_2 \times \ldots \times d_m$ covariance tensor, instead of one or multiple $d_p \times d_q$ covariance matrices, where $p,q = 1, \ldots ,m$ are the view indices. The tensor decomposition method we adopt in this paper is the ALS algorithm \cite{PM-Kroonenberg-and-J-De-Leeuw-Psychometrika-1980, P-Comon-et-al-JoC-2009}, which could result in satisfactory accuracy but is not efficient; 2) TCCA is much more efficient than DSE or SSMVD when the feature dimensions are not very high and the number of instances is large (see Fig. \ref{fig:Cst_vs_Dim_Lin_SecStr} for example). This demonstrates the superiority of TCCA compared with the existed unsupervised multi-view dimension reduction methods on the large sample size problems.

\section{Conclusion}\label{sec:Conclusion}
Standard CCA cannot deal with multi-view data, and its typical multi-view extensions ignore the high order statistics (correlation information) among all feature views. To resolve this problem, we have presented tensor CCA (TCCA) to discover such statistics by analyzing the covariance tensor of all views.

From the experimental validation on a variety of application tasks, we conclude that: 1) finding a common subspace for all views using the CCA-based strategy is often better than simply concatenating all the features, especially when the feature dimension is high; 2) examining more statistics, which may require more unlabeled data to be utilized, often leads to better performance; 3) by exploring the high order statistics, the proposed TCCA outperforms the other methods, especially when the dimension of the common subspace is high.

Compared with CCA and its traditional multi-view extensions, the main disadvantage of the proposed TCCA is the high computational cost. Most of the TCCA cost lies in the tensor decomposition, which is not the point of this paper. In the future, we will devote efficient tensor decomposition methods that could speed up TCCA, or introduce the parallel computing technique by utilizing GPU to accelerate the ALS tensor decomposition.

\begin{figure}[!t]
\centering
  \subfigure{\includegraphics[width=0.45\linewidth]{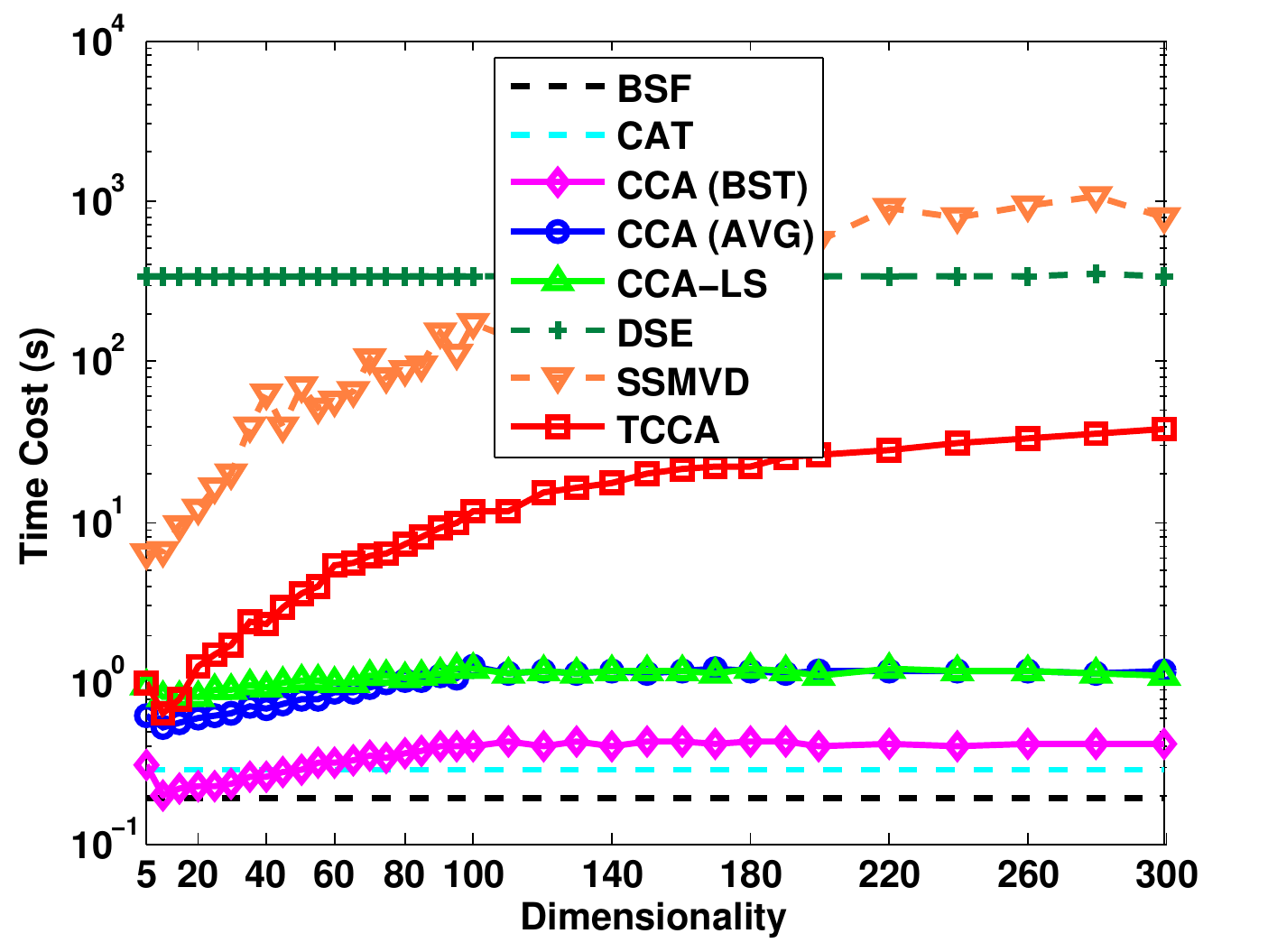}
  \label{subfig:TCst_vs_Dim_Lin_SecStr}
  }
  \hfil
  \subfigure{\includegraphics[width=0.45\linewidth]{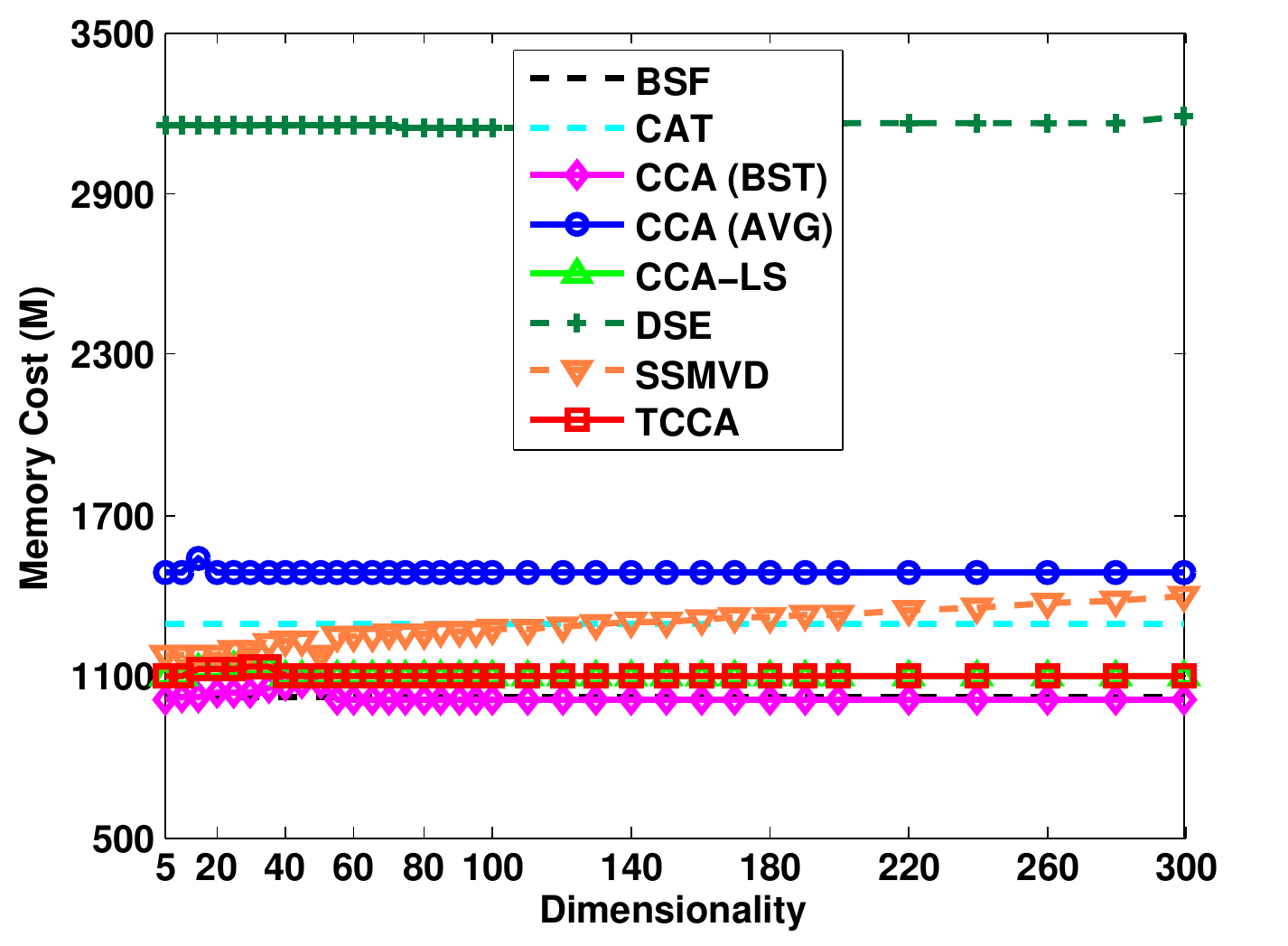}
  \label{subfig:MCst_vs_Dim_Lin_SecStr}
  }
  \caption{Computational complexity vs. dimension of the common subspace on the SecStr dataset. (Top: time cost in seconds; Bottom: memory cost in Megabits.)}
\label{fig:Cst_vs_Dim_Lin_SecStr}
\end{figure}

\begin{figure}[!t]
\centering
  \subfigure{\includegraphics[width=0.45\linewidth]{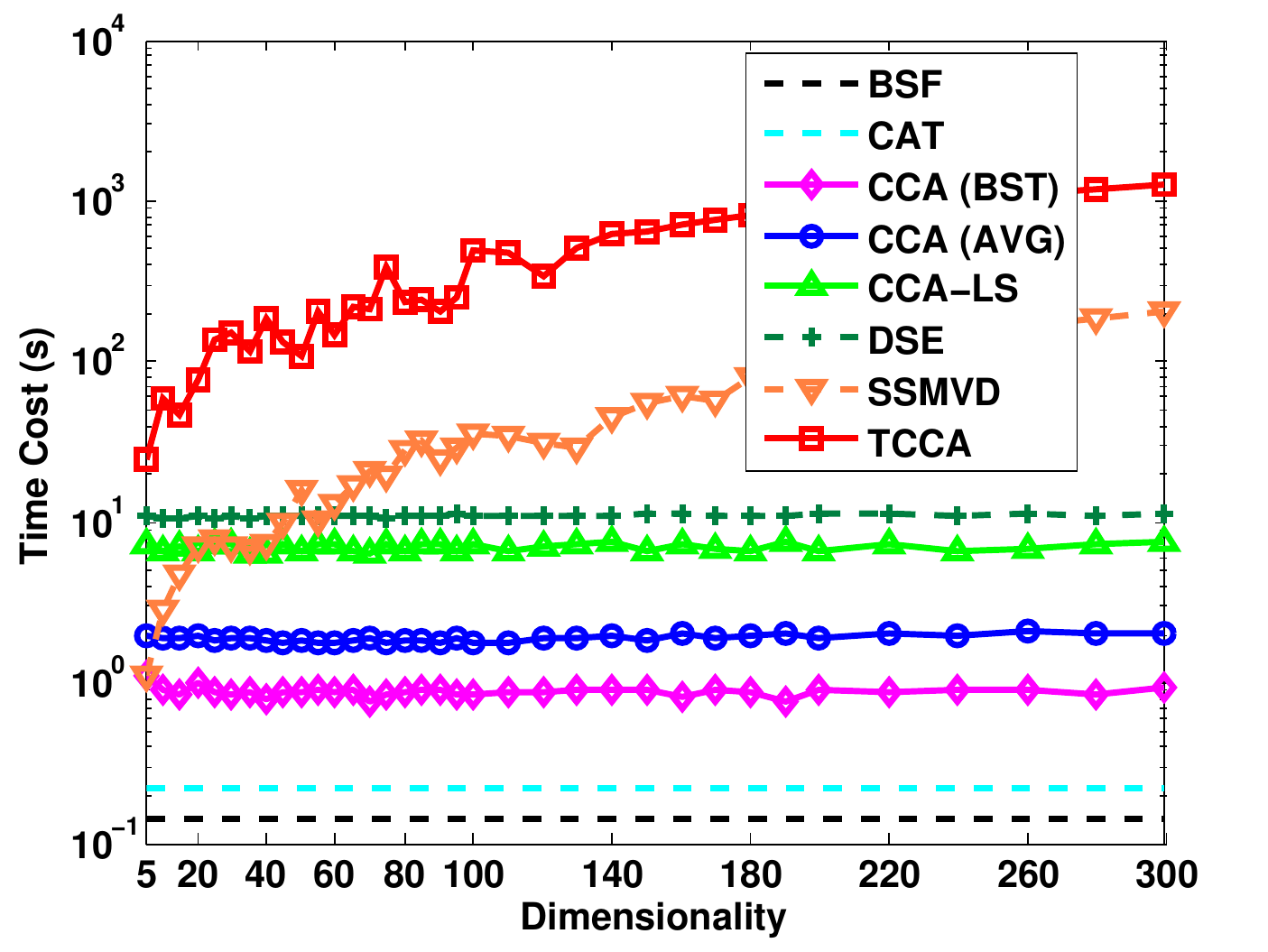}
  \label{subfig:TCst_vs_Dim_Lin_Ads}
  }
  \hfil
  \subfigure{\includegraphics[width=0.45\linewidth]{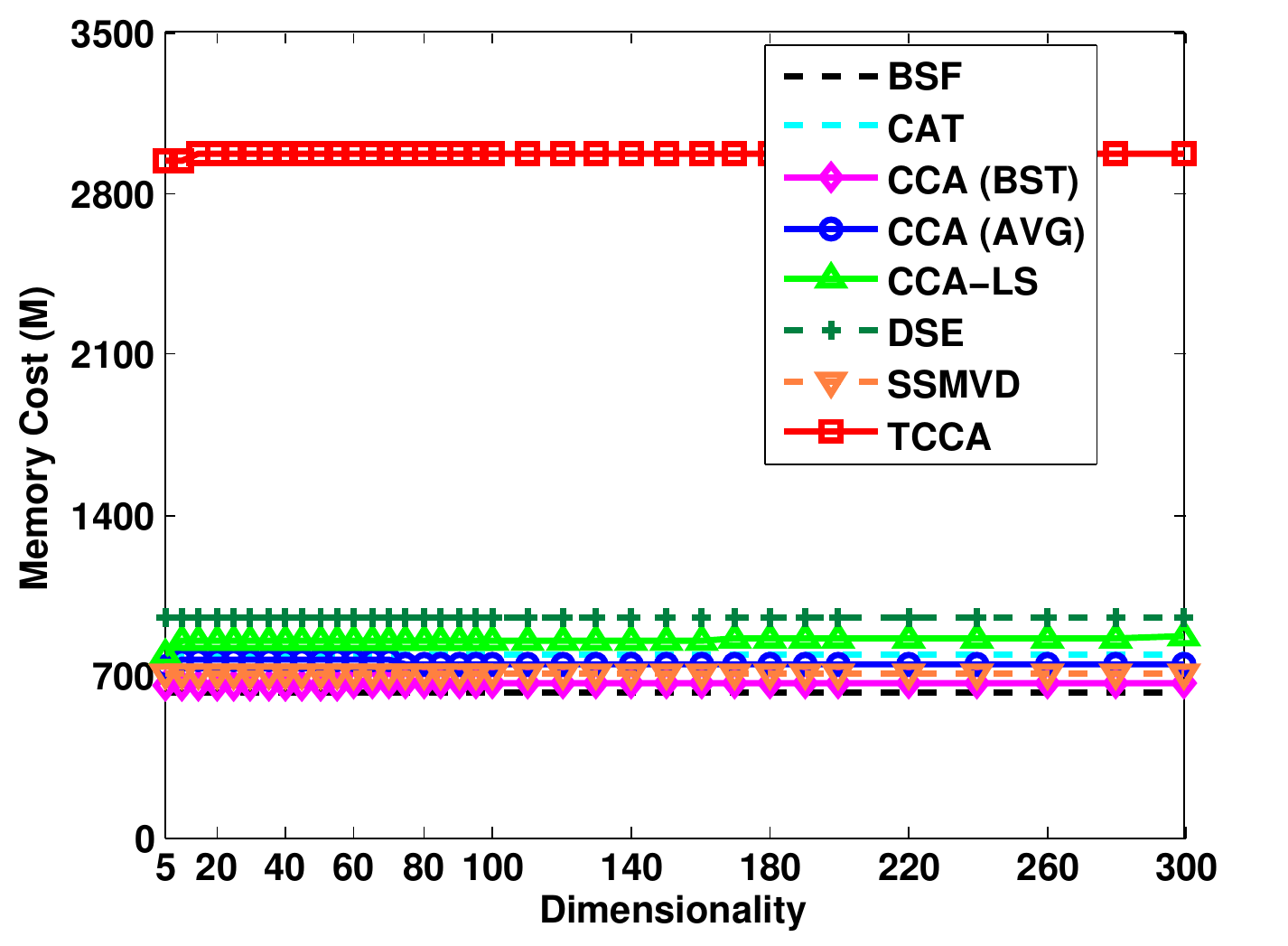}
  \label{subfig:MCst_vs_Dim_Lin_Ads}
  }
  \caption{Computational complexity vs. dimension of the common subspace on the Ads dataset. $100$ labeled training samples are utilized. (Top: time cost in seconds; Bottom: memory cost in Megabits.)}
\label{fig:Cst_vs_Dim_Lin_Ads}
\end{figure}

\begin{figure}[!t]
\centering
  \subfigure{\includegraphics[width=0.45\linewidth]{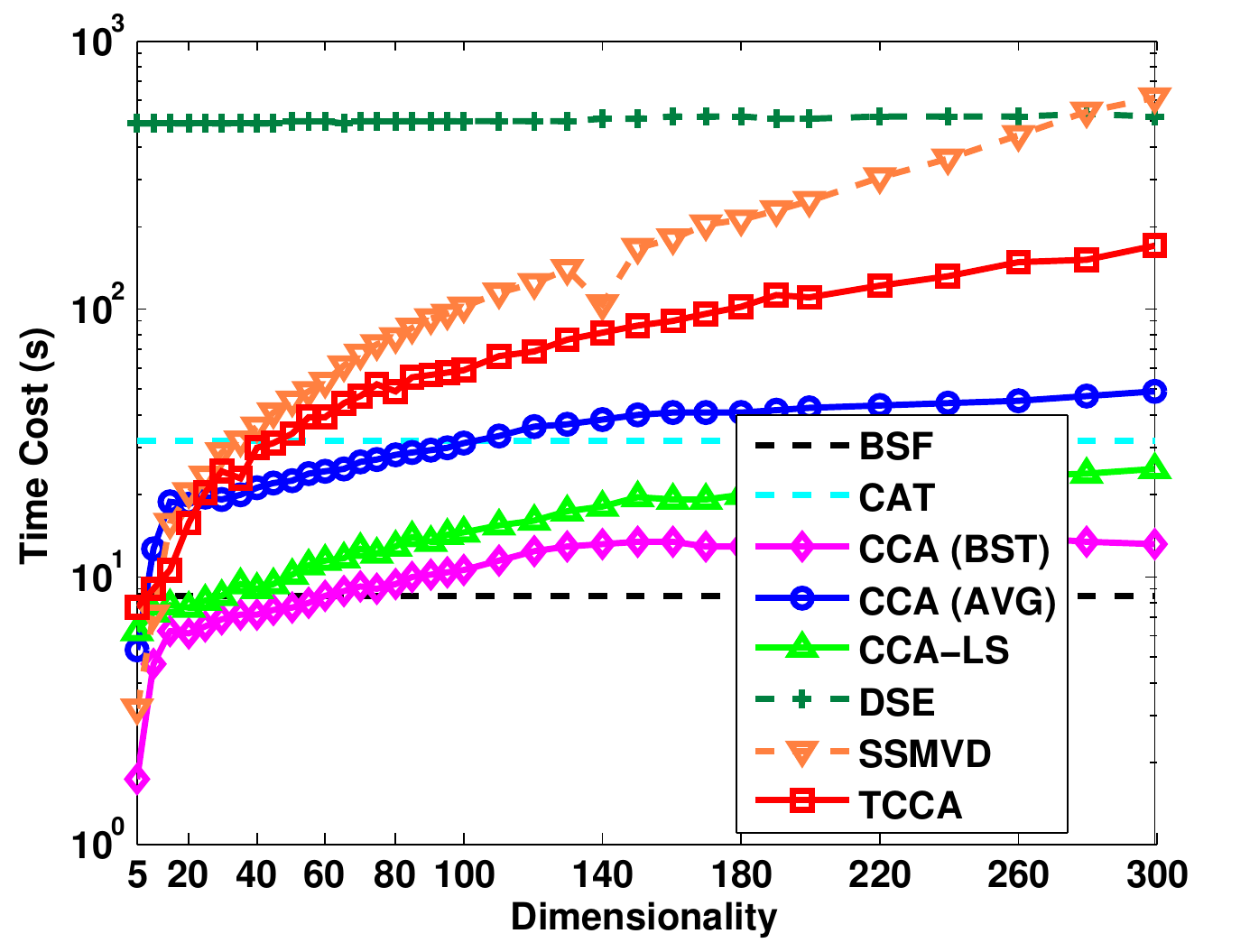}
  \label{subfig:TCst_vs_Dim_Lin_NUS}
  }
  \hfil
  \subfigure{\includegraphics[width=0.45\linewidth]{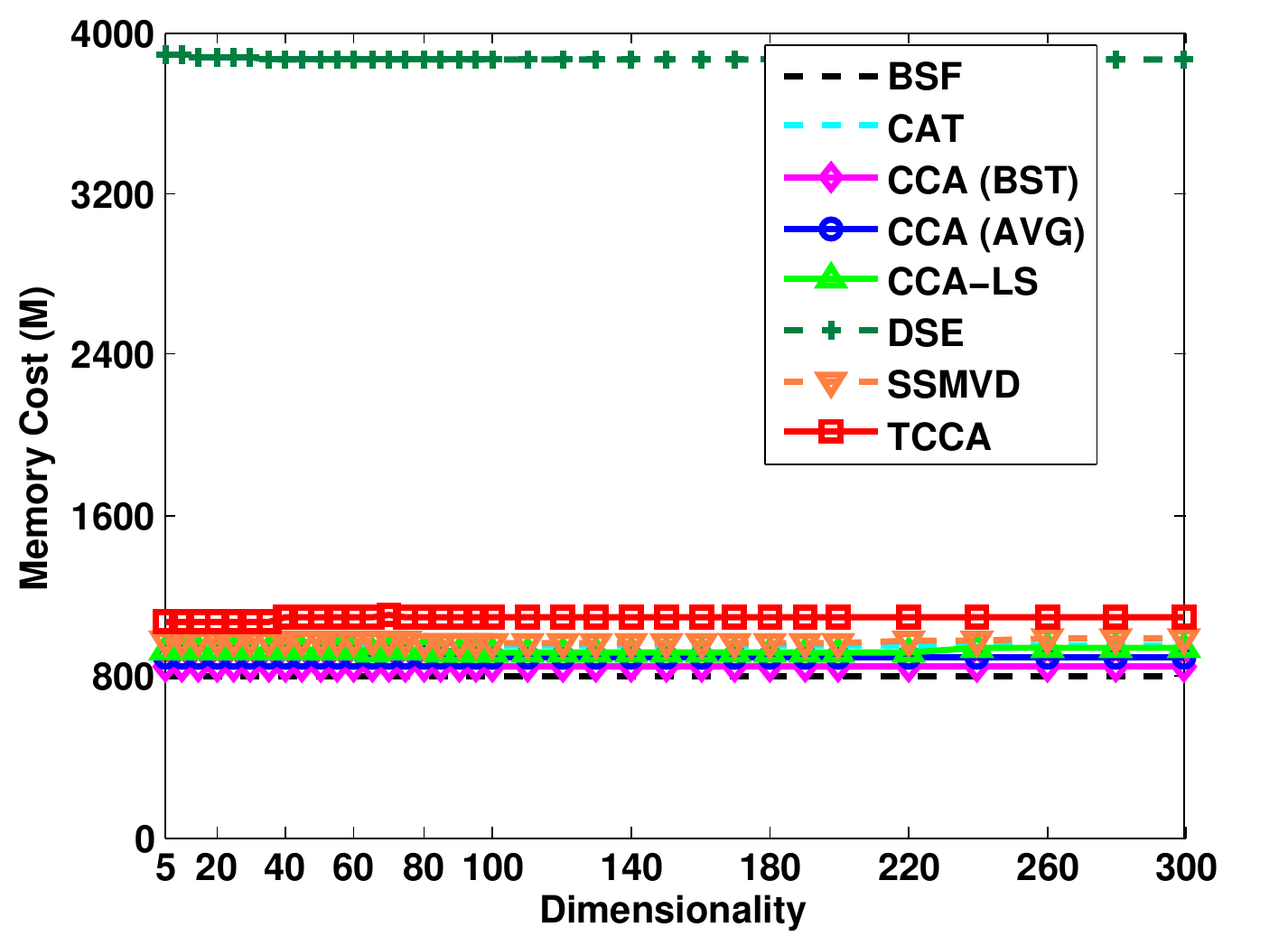}
  \label{subfig:MCst_vs_Dim_Lin_NUS}
  }
  \caption{Computational complexity vs. dimension of the common subspace on the NUS-WIDE mammal subset. $6$ labeled samples for each mammal concept are utilized. (Top: time cost in seconds; Bottom: memory cost in Megabits.)}
\label{fig:Cst_vs_Dim_Lin_NUS}
\end{figure}

\begin{figure}[!t]
\centering
  \subfigure{\includegraphics[width=0.45\linewidth]{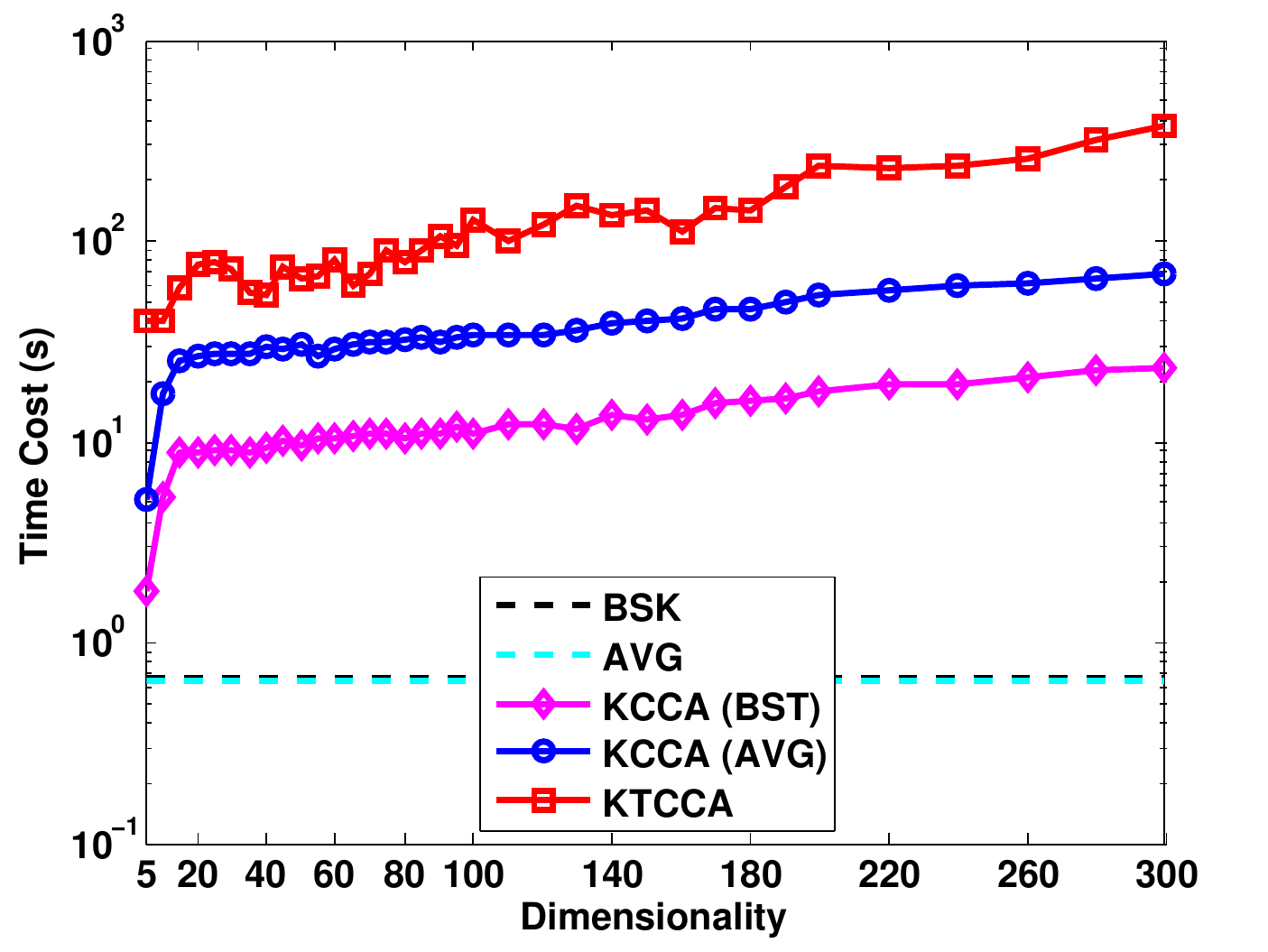}
  \label{subfig:TCst_vs_Dim_Ker_NUS}
  }
  \hfil
  \subfigure{\includegraphics[width=0.45\linewidth]{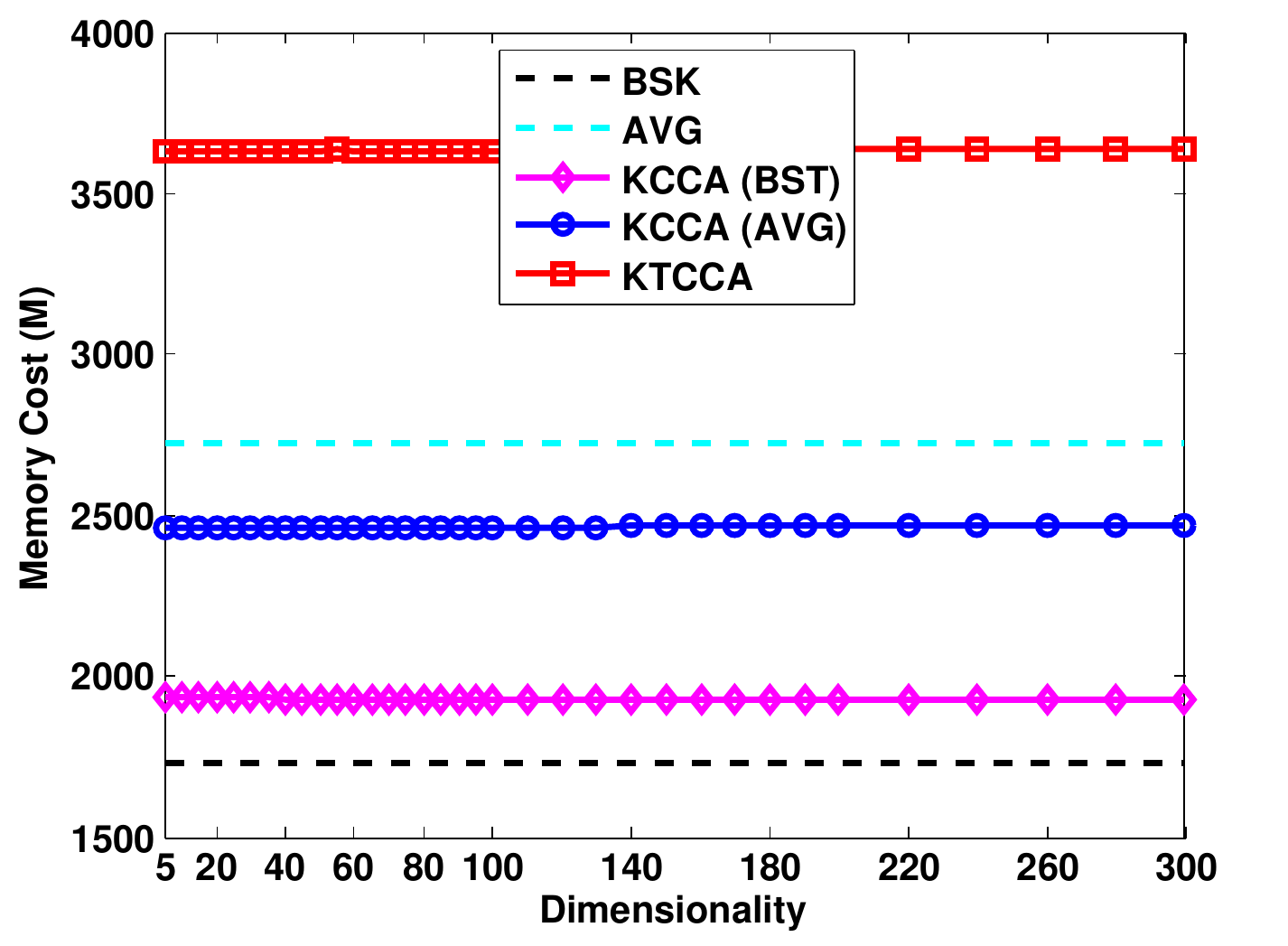}
  \label{subfig:MCst_vs_Dim_Ker_NUS}
  }
  \caption{Computational complexity (of the non-linear methods) vs. dimension of the common subspace on the NUS-WIDE mammal subset, where a small set of $500$ instances and $6$ labeled samples for each mammal concept are utilized. (Top: time cost in seconds; Bottom: memory cost in Megabits.)}
\label{fig:Cst_vs_Dim_Ker_NUS}
\end{figure}

\appendix

\section{Proof of Thoerem \ref{thm:Canonical_Correlation}}
\begin{proof}
\label{prf:Canonical_Correlation}
According to the definition of the element-wise product, we have
\begin{equation}
\begin{split}
\rho = & (\mathbf{z}_1 \odot \mathbf{z}_2 \odot \ldots \odot \mathbf{z}_m)^T \mathbf{e} = \sum_{n=1}^N \mathbf{z}_1(n) \mathbf{z}_2(n) \ldots \mathbf{z}_m(n)
= \sum_{n=1}^N \prod_{p=1}^m \mathbf{z}_p(n) = \sum_{n=1}^N \prod_{p=1}^m \left( \sum_{j_p=1}^{d_p} \mathbf{x}_{pn}(j_p) \mathbf{h}(j_p) \right),
\end{split}
\end{equation}
where $\mathbf{z}_p(n)$ denotes the $n$'th entry of the vector $\mathbf{z}_p$, and the same notation is used for $\mathbf{x}_{pn}$ and $\mathbf{h}$. Additionally,
\begin{equation}
\begin{split}
\mathcal{C}_{12 \ldots m} (j_1, j_2, \ldots, j_m) = & \sum_{n=1}^N \mathbf{x}_{1n}(j_1) \mathbf{x}_{2n}(j_2) \ldots \mathbf{x}_{mn}(j_m) = \sum_{n=1}^N \prod_{p=1}^m \mathbf{x}_{pn}(j_p).
\end{split}
\end{equation}
According to the definition of the $p$-mode product of a tensor and matrix, we have
\begin{equation}
\begin{split}
& (\mathcal{C} \times_p \mathbf{h}_p^T )(j_1, \ldots ,j_{p-1},1,j_{p+1}, \ldots ,j_m) \\
= & \sum_{j_p=1}^{d_p} \mathcal{C}(j_1,j_2, \ldots, j_m) \mathbf{h}(j_p) = \sum_{j_p=1}^{d_p} \left( \sum_{n=1}^N \prod_{p=1}^m \mathbf{x}_{pn}(j_p) \right) \mathbf{h}(j_p)
= \sum_{n=1}^N \sum_{j_p=1}^{d_p} \left( \prod_{p=1}^m \mathbf{x}_{pn}(j_p) \right) \mathbf{h}(j_p).
\end{split}
\end{equation}
Therefore,
\begin{equation}
\begin{split}
& (\mathcal{C} \times_1 \mathbf{h}_1^T \times_2 \mathbf{h}_2^T \ldots \times_m \mathbf{h}_m^T)(1, \ldots ,1,1,1, \ldots ,1)
= \sum_{n=1}^N \prod_{p=1}^m \left( \sum_{j_p=1}^{d_p} \mathbf{x}_{pn}(j_p) \mathbf{h}(j_p) \right).
\end{split}
\end{equation}
This completes the proof.
\end{proof}

\section{Proof of Theorem \ref{thm:Tensor_Kernel_Calculation}}
\begin{proof}
\label{prf:Tensor_Kernel_Calculation}
Let $\mathcal{F} = \mathcal{C}_{12 \ldots m} \times_1 \phi^T(X_1) \times_2 \phi^T(X_2) \ldots \times_m \phi^T(X_m)$ and $\mathcal{G} = \frac{1}{N} \sum_{n=1}^N \mathbf{k}_{1n} \circ \mathbf{k}_{2n} \circ \ldots \circ \mathbf{k}_{mn}$, then according to the definition of the outer product, the $(j_1,j_2, \ldots, j_m)$'th entry of $\mathcal{G}$ is
\begin{equation}
\label{eq:G_Expression}
\mathcal{G}(j_1,j_2, \ldots, j_m) = \sum_{n=1}^N \mathbf{k}_{1n}(j_1) \mathbf{k}_{2n}(j_2) \ldots \mathbf{k}_{mn}(j_m),
\end{equation}
where $\mathbf{k}_{pn}(j_p)$ is the $j_p$'th element of the vector $\mathbf{k}_{pn}, p = 1, \ldots, m$. Additionally, the $(j_1,j_2, \ldots ,j_m)$'th entry of $\mathcal{C}$ is
\begin{equation}
\mathcal{C}(j_1,j_2, \ldots, j_m) = \sum_{n=1}^N \phi_{1n}(i_1) \phi_{2n}(i_2) \ldots \phi_{mn}(i_m),
\end{equation}
where $\phi_{pn}(i_p)$ is the $i_p$'th element of the vector $\phi(x_{pn}), p = 1, \ldots, m$. According to the definition of the tensor-matrix product, we have
\begin{equation}\notag
\begin{split}
& (\mathcal{C} \times_p \phi^T(X_p))(i_1, \ldots, i_{p-1}, j_p, i_{p+1}, \ldots, i_m) \\
= & \sum_{i_p=1}^{D_p} C(i_1,i_2, \ldots, i_m) \phi_p^T(j_p,i_p) \\
= & \sum_{i_p=1}^{D_p} \sum_{n=1}^N \phi_{1n}(i_1) \phi_{2n}(i_2) \ldots \phi_{mn} (i_m) \phi_p^T(j_p,i_p) \\
= & \sum_{n=1}^N \phi_{1n}(i_1) \ldots \phi_{p-1,n}(i_{p-1}) \phi_{p+1,n}(i_{p+1}) \ldots \phi_{mn}(i_m) \sum_{i_p=1}^{D_p} \phi_{pn}(i_p) \phi_p^T(j_p,i_p) \\
= & \sum_{n=1}^N \phi_{1n}(i_1) \ldots \phi_{p-1,n}(i_{p-1}) \phi_{p+1,n}(i_{p+1}) \ldots \phi_{mn}(i_m) \mathbf{k}_{pn}(j_p)
\end{split}
\end{equation}
Then the $(j_1,j_2, \ldots, j_m)$'th entry of $\mathcal{F}$ is
\begin{equation}
\label{eq:F_Expression}
\begin{split}
\mathcal{F}(j_1,j_2, \ldots, j_m)
= & (\mathcal{C} \times_1 \phi^T(X_1) \times_2 \phi^T(X_2) \ldots \times_m \phi^T(X_m))(j_1,j_2, \ldots, j_m) \\
= & \sum_{n=1}^N \mathbf{k}_{1n}(j_1) \mathbf{k}_{2n}(j_2) \ldots \mathbf{k}_{mn}(j_m).
\end{split}
\end{equation}
By comparing (\ref{eq:G_Expression}) and (\ref{eq:F_Expression}), we complete the proof.
\end{proof}

\bibliographystyle{spbasic}      
\bibliography{./TCCA_Arxiv_Draft}   

\end{document}